\definecolor{pastelred}{rgb}{1.0, 0.41, 0.38}
\definecolor{warmblack}{rgb}{0.0, 0.26, 0.26}
\definecolor{RawSienna}{rgb}{0.83, 0.54, 0.35}
\definecolor{pastelgreen}{rgb}{0.47, 0.87, 0.47}
\pgfplotsset{compat=1.18}
\definecolor{Gray}{gray}{0.93}
\definecolor{Orange}{rgb}{1,0.5,0}
\definecolor{DGray}{gray}{0.83}
\definecolor{LightCyan}{rgb}{0.88,1,1}
\definecolor{asparagus}{rgb}{0.53, 0.66, 0.42}
\def\eqref#1{(\ref{#1})}
\def\1{\bm{1}}
\DeclareMathAlphabet{\mathsfit}{\encodingdefault}{\sfdefault}{m}{sl}
\SetMathAlphabet{\mathsfit}{bold}{\encodingdefault}{\sfdefault}{bx}{n}
\title{Navigating the Deep: \\ End-to-End Extraction on Deep Neural Networks}
\author{}
\institute{}
\author{%
  Haolin Liu\inst{1,2} \and
  Adrien Siproudhis\inst{2} \and
  Samuel Experton\inst{3} \and
  Peter Lorenz\inst{2} \and \\
  Christina Boura\inst{3,4} \and
  Thomas Peyrin\inst{2}
}
\institute{%
  Shanghai Jiao Tong University, China\inst{1} \\
  Nanyang Technological University, Singapore\inst{2} \\
  IRIF, Université Paris Cité, France\inst{3} \\
  Institut universitaire de France (IUF), Paris, France\inst{4}
}
\begin{document}
\maketitle

\begin{abstract}
Neural network model extraction has recently emerged as an important security concern, as adversaries attempt to recover a network’s parameters via black-box queries. Carlini et al. proposed in CRYPTO’20 a model extraction approach inspired by differential cryptanalysis, consisting of two steps: signature extraction, which extracts the absolute values of network weights layer by layer, and sign extraction, which determines the signs of these signatures. However, in practice this signature-extraction method is limited to very shallow networks only, and the proposed sign-extraction method is exponential in time. Recently, Canales-Martínez et al. (Eurocrypt’24) proposed a polynomial-time sign-extraction method, but it assumes the corresponding signatures have already been successfully extracted and can fail on so-called low-confidence neurons.

In this work, we first revisit and refine the signature extraction process by systematically identifying and addressing for the first time critical limitations of Carlini et al.'s signature-extraction method. These limitations include rank deficiency and noise propagation from deeper layers. To overcome these challenges, we propose efficient algorithmic solutions for each of the identified issues, greatly improving the capabilities of signature extraction. Our approach permits the extraction of much deeper networks than previously possible. 

In addition, we propose new methods to improve numerical precision in signature extraction, and enhance the sign extraction part by combining two polynomial methods to avoid exponential exhaustive search in the case of low-confidence neurons. This leads to the very first end-to-end model extraction method that runs in polynomial time.  

We validate our attack through extensive experiments on ReLU-based neural networks, demonstrating significant improvements in extraction depth. For instance, our attack extracts consistently at least eight layers of neural networks trained on either the {\tt MNIST} or {\tt CIFAR-10} datasets, while previous works could barely extract the first three layers of networks of similar width. Our results represent a crucial step toward practical attacks on larger and more complex neural network architectures. \end{abstract}

\keywords{ReLU-based neural networks \and signature extraction \and weight-recovery \and sign extraction \and end-to-end attack}

\section{Introduction}

Neural Networks (NNs) are a class of machine learning models composed of layers of interconnected units (or neurons) that transform input data and learn to recognize patterns through a training process. Deep Neural Networks (DNNs) are a subset of NNs with multiple hidden layers, enabling them to model highly complex functions. DNNs have become indispensable in various applications, including computer vision (e.g., image recognition, and video analysis), natural language processing, automated medical diagnostics, and fraud detection, among others. However, training DNNs requires large datasets, significant computational resources, and carefully fine-tuned algorithms~\cite{BishopBook2025,Xiao2025FoundationsOL}. As a result, trained DNNs have become valuable intellectual assets, making them attractive targets for attackers seeking to extract the model rather than invest in training their own. 

In many cases, DNNs are deployed as cloud-based or online services, allowing users to interact with them without direct access to their internal parameters~\cite{googlecloud,huggingface}. A natural question, therefore, is how well different DNNs resist attacks in this black-box setting, where an adversary can query the network using random or carefully chosen inputs, potentially adaptively, and exploit the observed outputs to reconstruct either the exact internal parameters or an approximation sufficient to construct a functionally equivalent network. 

Neural network model extraction is an old and well-studied problem. 
The earliest work dates back to 1994, when Fefferman~\cite{fefferman1994reconstructing} proved that the output of a sigmoid network uniquely determines its architecture and weights, up to trivial equivalences. Later, in 2005, Lowd and Meek~\cite{lowd2005adversarial} introduced new algorithms for reverse engineering linear classifiers and applied these techniques to spam filtering, marking one of the first practical extraction attacks.
The field saw renewed interest in 2016 with the work of Tram\`er et al.~\cite{USENIX:TZJRR16}, who demonstrated attacks on deployed machine learning models accessible through APIs that output high-precision confidence values (also referred to as scores), in addition to class labels. 
By exploiting these confidence scores, they successfully mounted attacks on various model types, including logistic regression  and decision trees.

For non-linear models,~\cite{USENIX:TZJRR16} and later~\cite{Papernot2016PracticalBA} introduced the notion of \textit{task accuracy extraction}, where the goal is to build a model that performs well on the same decision task, achieving high accuracy on predictions, without necessarily reproducing the exact outputs of the original model. This is different from \textit{functionally equivalent extraction}, where the objective is to replicate the original model’s outputs on all inputs, regardless of the ground truth. 
Jagielski et al.~\cite{jagielski2020high}, who introduced this taxonomy, argued that learning-based approaches like~\cite{Papernot2016PracticalBA,USENIX:TZJRR16} are fundamentally limited when it comes to achieving functionally equivalent extraction.
Early attempts for functionally equivalent extraction relied on access to internal gradients of the model~\cite{milli2019model}, side-channel information~\cite{batina2019csi}, or were limited to extracting only a small number of layers~\cite{jagielski2020high}. 

A significant step in model extraction came in 2020 with Carlini et al.~\cite{C:CarJagMir20}, who studied ReLU networks in the S5 setting of~\cite{cryptoeprint:2024/1580}, which exposes the network’s pre-normalization outputs. In this work, the authors reframed the problem from a cryptanalytic perspective by observing that DNNs share key similarities with block ciphers, including their iterative structure, alternating linear and non-linear layers, and the use in parallel of a small function (S-box in block ciphers, activation function in neural networks) to ensure non-linearity. Taking inspiration from cryptanalytic attacks on block ciphers, particularly differential cryptanalysis~\cite{C:BihSha90}, Carlini et al. proposed a new two-step iterative approach for recovering a model’s internal parameters. Their method decomposes the extraction process into two distinct phases: \emph{signature extraction} and \emph{sign extraction}. In the first step, the absolute values of the weights in a layer are reconstructed (signature extraction), followed by a second step where the correct signs are determined (sign extraction). This process is applied iteratively, proceeding layer by layer. This approach enabled them to successfully extract neural networks, including a model with $100{,}000$ parameters trained on the {\tt MNIST} digit recognition task, using only $2^{21.5}$ queries and less than an hour of computation. However, their method for extracting the signs had an exponential complexity in the number of neurons, limiting its feasibility to small networks with typically very few (at most 3) layers.

In 2024, Canales-Martínez et al.~\cite{EC:CCHRSS24} extended the work of Carlini et al.~\cite{C:CarJagMir20} by introducing several new algorithms for the sign-extraction step in ReLU-based neural networks. Their improved sign-extraction methods permitted them to extract the parameters of significantly larger networks than those considered in~\cite{C:CarJagMir20}. For instance, they reported successfully extracting the parameters of a neural network with 8 hidden layers of 256 neurons each (amounting to 1.2 million parameters) trained on the {\tt CIFAR-10} dataset~\cite{krizhevsky2009cifar} for image classification across 10 categories. It is important to note, however, that their experiments assumed the absolute values of the weights had already been recovered using the method of Carlini et al.~\cite{C:CarJagMir20} and were not implemented as full end-to-end attacks.

More recently, Foerster et al.~\cite{Hannah} conducted a detailed analysis of the signature and sign extraction procedures, implementing and benchmarking them on various ReLU-based neural networks. Their work revealed that, contrary to what was previously believed, the main practical bottleneck preventing these attacks from advancing beyond the early layers lies in the signature-extraction step introduced in~\cite{C:CarJagMir20}. Because of this, both~\cite{C:CarJagMir20} and~\cite{Hannah} failed in practice to recover weight parameters beyond the third layer. Although the authors of~\cite{EC:CCHRSS24} reported successful extraction from deeper networks, this was assuming that signatures were already extracted with~\cite{C:CarJagMir20}'s method, which could not work on deeper parts of the networks. In addition, Foerster et al.~\cite{Hannah} also identified a critical limitation in the sign-extraction method of~\cite{EC:CCHRSS24}: for many networks, the confidence in the recovered signs of neurons does not increase even with additional iterations, contradicting the claim in~\cite{EC:CCHRSS24} that wrong signs could be fixed by testing more critical points. To address this, they proposed applying exhaustive search, similar to~\cite{C:CarJagMir20}, for neurons with low confidence. This indicates that, for many networks, sign extraction has not yet reached true polynomial-time extraction, leaving it as another unresolved bottleneck.

In parallel with these efforts to improve performance under full access to confidence scores, other members of the research community have turned their attention to the more realistic ``hard-label'' scenario~\cite{cryptoeprint:2024/1580,AC:CDGSWW24,canales2025extracting}, (S1 setting in the taxonomy of~\cite{cryptoeprint:2024/1580}) where the DNN outputs only the predicted class label, corresponding to the highest confidence score, while the actual confidence values remain hidden. We show however that important problems persist even in the more informative S5 setting, and resolving them is a prerequisite for full extraction of deep networks. Moreover, because many techniques are shared across settings, advances achieved in S5 naturally carry over and improve extraction in the more restrictive scenarios as well. 

To date, no method can efficiently recover a network’s parameters beyond the third hidden layer in any setting (from S1 to S5)\footnote{The authors of~\cite{cryptoeprint:2024/1580} report having successfully extracted the 4th layer of a single network. However, in that network, this 4th layer is very contractive, causing the issues identified in our work not to arise.}. This highlights the need for significant improvements in the extraction to enable attacks on deeper networks, as well as an exploration of more challenging end-to-end attacks (attacks recovering a model’s parameters solely from black-box input–output pairs, without any simplification such as artificially separating sign and signature extraction).

 \subsection*{Our Contributions}

Prior attacks do not scale to deep neural networks not merely because increased depth raises computational cost, but because their methods are incomplete and overlook issues that grow more severe in deeper models. In this work, we diagnose these challenges and propose practical solutions, thereby enabling effective end-to-end extraction on much deeper neural networks.

\medskip
\noindent{\bf Reaching deeper layers.} By identifying and addressing two critical limitations in signature extraction which arise as network depth increases, we allow signature extraction to progress beyond the first few layers:

\begin{itemize}
\item {\it Incorrect signature extraction due to rank-deficient systems.}
  The original attack produces incorrect signatures at deeper layers when the linear system used to compute signatures is rank-deficient. We provide a method to increase the rank, yielding correct signature extraction on deeper networks.
    \item {\it Overlooked influence of deeper layers.}
    Contrary to what was previously believed, we show that signatures originating from deeper layers than the one attacked can be misinterpreted as coming from the target layer. We propose strategies to filter them out depending on the neural network targeted.
\end{itemize}

\medskip

\noindent {\bf End-to-end extraction.} We present the first end-to-end model extraction that runs in polynomial time, achieved through notable enhancements to signature precision improvement and sign-extraction methods.
\begin{itemize}
    \item {\it Signature precision improvement.} In addition to Carlini et al.~\cite{C:CarJagMir20}'s approach for improving signature precision, we identify and address two additional sources of imprecision in signature extraction, which improves extraction efficiency and supports wide layers.
    \item {\it Confident sign extraction.} As noted by~\cite{Hannah}, the neuron wiggle method of~\cite{EC:CCHRSS24} fails for the networks we consider on so-called low-confidence neurons. Instead of the exponential exhaustive search that~\cite{Hannah} uses, we combine two polynomial methods from~\cite{EC:CCHRSS24} to avoid relying on low-confidence neurons.
\end{itemize}

We validate all our improvements through extensive experiments on ReLU-based neural networks, showing significant improvements in extraction depth. Notably, we present the first end-to-end model extraction on eight-layer networks with eight neurons per layer 
trained on MNIST and CIFAR-10 datasets, extracting almost all the weights and achieving a very low relative error $\epsilon \leq 3.6 \times 10^{-4}$ on $73\%$ of the input space,
while previous works could barely extract the first three layers. These results mark a significant step toward practical attacks on larger, more complex neural network architectures. Our full code is accessible in 
\url{https://github.com/PsyduckLiu/End-to-End-Deep-Neural-Network-Extraction}.

The rest of the article is organized as follows. Section~\ref{sec:preliminaries} introduces preliminaries: notations, the attack model, and a short overview of the original signature-extraction and sign-extraction methods. Section~\ref{sec:strategies} presents our improvements to signature extraction. Section~\ref{sec:signs} describes our new sign-extraction strategy. Section~\ref{sec:eval_weights} introduces our evaluation metrics and discusses a taxonomy of unrecovered weights. Section~\ref{experiments} reports our experimental evaluation and end-to-end results. We evaluate our attack on three different networks: one trained on {\tt CIFAR-10} with architecture $3072\text{-}8^{(8)}\text{-}1$ and two networks trained on {\tt MNIST} with architectures $784\text{-}8^{(8)}\text{-}1$ and $784\text{-}16^{(8)}\text{-}1$, which we refer to as Model I, II, and III, respectively. Finally, Section \ref{sec:conclusion} concludes and discusses limitations.

\section{Preliminaries} \label{sec:preliminaries}
In this section, we introduce important definitions and notations (\Cref{definitions}), followed by assumptions regarding the attack setting and goal (\Cref{assumptions}) and finally an overview of the original attack of~\cite{C:CarJagMir20} and~\cite{EC:CCHRSS24} (\Cref{overview}).

\subsection{Notations and Definitions}
\label{definitions}
This paper models neural networks as parametrized functions whose parameters are the unknowns we aim to extract. Our results do not depend on how neural networks are trained or applied. As a result, no prior knowledge about them is required to understand the attack.

A neural network consists of fundamental units called neurons, which are organized into layers and connected to other neurons in both the previous and next layers. Each neuron has an associated weight vector for its incoming neurons from the previous layer, along with a bias term that influences its output. Following the notations and definitions from~\cite{EC:CCHRSS24,C:CarJagMir20}, we now present several central definitions related to neural networks. A simple example to illustrate the definitions can be found in Appendix~\ref{example_network}.

\begin{definition}[$r$-deep neural network]
    \label{def:dnn}
    An \emph{$r$-deep fully connected neural network} of architecture $[d_0, \cdots, d_{r+1}]$ is a function $f: \mathbb{R}^{d_0} \to \mathbb{R}^{d_{r+1}}$ composed of alternating linear layers $\ell^{(i)}: \mathbb{R}^{d_{i-1}} \to \mathbb{R}^{d_{i}}$ and a non-linear activation function $\sigma$ acting component-wise such that: $
     f = \ell^{(r+1)} \circ \sigma \circ \dots \circ \sigma \circ \ell^{(2)} \circ \sigma \circ \ell^{(1)}$, where $\ell^{(i)}(x) = A^{(i)}(x)+b^{(i)}$; $A^{(i)} \in \mathbb{R}^{{d_i\times d_{i-1}}}$,  $b^{(i)} \in \mathbb{R}^{d_i}$. 
  We call:
    \begin{itemize}[leftmargin=0.5cm]
        \item $r$: the number of layers (or \emph{depth} of the network).
        \item $d_i$: the number of neurons in the $i$-th layer (or \emph{width} of layer $i$).
        \item $\ell^{(i)}$: the $i$-th linear layer function.
        \item $A^{(i)}$: the $i$-th linear layer weight matrix.
        \item $b^{(i)}$: the $i$-th linear layer bias vector.
    \end{itemize}
\end{definition}

To extend the notations, the architecture $[d_0, \dots, d_{r+1}]$ can also be written as $d_0-\dots-d_{r+1}$. For consecutive layers with the same dimension, an exponential notation can be used for compactness, e.g., $20-10^{(3)}-1$ represents the architecture $20-10-10-10-1$.

As in~\cite{EC:CCHRSS24,C:CarJagMir20,Hannah}, this research only considers fully connected neural networks using the widespread ${\rm ReLU}$ activation function~\cite{Nair2010RectifiedLU} applied component-wise. The structure of a fully connected neural network resembles that of substitution-permutation networks (SPNs) such as the AES~\cite{aes}. The component-wise application of the activation function is analogous to the use of S-boxes in SPNs, hence the analogy with cryptanalysis introduced in \cite{C:CarJagMir20}.

\begin{definition}[ReLU neural network] \label{def:reludnn} We say that a neural network $f$ defined as above is a \emph{ReLU neural network} if its non-linear activation function $\sigma$ is 
 \begin{align*}
 \sigma(v) &= ({\rm ReLU}(v_1), {\rm ReLU}(v_2), \dots, {\rm ReLU}(v_n))\\
 &= (  \max(v_1,0), \max(v_2, 0), \dots, \max(v_n, 0))
 \end{align*} for $v = (v_1, v_2, \dots, v_n) \in \mathbb{R}^n$.
\end{definition}

In the following, we define a reduced-round neural network $F^{(i)}$ for $1\leq i \leq r$ as a function $F^{(i)}:\mathbb{R}^{d_{0}} \to \mathbb{R}^{d_{i}}$ given by:  
\[
    F^{(i)} = \sigma \circ \ell^{(i)} \circ \sigma \circ \dots \circ \sigma \circ \ell^{(2)} \circ \sigma \circ \ell^{(1)}.
\]

$F^{(i)}$ shares the same linear transformations and activation functions as the original function $f$, up to layer $i$. It will be used to track the transformation of an input vector as it goes through $f$. By $F^{(r+1)}$ and $F^{(0)}$, we mean respectively $f$ and the identity function.

The \emph{$k$-th neuron} of layer $i$ is defined as the function $\eta^{(i)}_k(x) = A^{(i)}_k \cdot x + b^{(i)}_k$, where $A^{(i)}_k$ is the $k$-th row of $A^{(i)}$ and $b^{(i)}_k$ is the $k$-th coordinate of $b^{(i)}$.

A central notion in the analysis of~\cite{C:CarJagMir20} is that of \emph{critical points} of neurons. They enable the extraction of their corresponding neuron’s weights through carefully chosen queries.

\begin{definition}[critical point]
    An input point $x \in \mathbb{R}^{d_0}$ is called a \emph{critical point} of a neuron $\eta_k^{(i)}$ if
    \[
    \eta_k^{(i)}\circ F^{(i-1)}(x)=0
    \] 
\end{definition} 
The activation pattern for a given input $x$ is the set of all (active) neurons that contribute to the output $f(x)$.

\begin{definition}[activation pattern] The \emph{activation pattern} of $x$ through $f$ at layer $i$, for $1\leq i \leq r$, is the set 
\[\mathcal{S}^{(i)}_f(x) := \{\eta_k^{(j)} \: \big| \: \eta_k^{(j)} \circ F^{(j-1)}(x)> 0, \quad 1\leq j \leq i, 1\leq k \leq d_j\}.\]
If $i = r$, we note $\mathcal{S}^{(i)}_f$ as $\mathcal{S}_f$ and refer to it as the activation pattern of $x$ through $f$.
If $\eta_k^{(i)} \in \mathcal{S}_f(x)$ then we say that neuron $\eta_k^{(i)}$ is \emph{active}, otherwise we say that it is \emph{inactive} (for $x$ and $f$).
\end{definition}

\begin{definition}[polytope]
    The polytope of $x$ at layer $i$, $\mathcal{P}^{(i)}_x$, is the largest connected open subset of
    \[\{x'\in\mathbb{R}^{d_0} \: | \: \mathcal{S}^{(i)}_f(x)=\mathcal{S}^{(i)}_f(x')\}
    \]
    containing $x$.
    If $i = r$, we note $\mathcal{P}^{(i)}_x$ as $\mathcal{P}_x$ and refer to it as the polytope of $x$.\\
\end{definition}
Therefore, $\mathcal{P}_x$ represents the region of the input space consisting of all points $x'$ connected to $x$ that have the same activation pattern. Let's now justify the use of the term \emph{polytope} to describe this space.

\begin{lemma}[local affine network]
First, for $x \in \mathbb{R}^{d_0}$, for all $1 \leq i \leq r$, the network $F^{(i)}$ is affine on $\mathcal{P}^{(i)}_x$, meaning that there exists $\Gamma_x^{(i)} \in \mathbb{R}^{d_{i}\times d_0}$, $\gamma^{(i)}_x\in \mathbb{R}^{d_{i}}$ such that $\forall x' \in \mathcal{P}^{(i)}_x$, we have $F^{(i)}(x') = \Gamma^{(i)}_x x' + \gamma^{(i)}_x.$ We note:
\begin{itemize}
    \item $F^{(i)}_x$ the function $x' \mapsto \Gamma_x^{(i)} x' + \gamma_x^{(i)}$.
    \item $f_x$ and $F_x^{(r+1)}$ the function $x' \mapsto\ell^{(r+1)}\circ F^{(r)}_x(x')$.
\end{itemize}

Second, it follows that if $x$ is not a critical point, there exists \( \epsilon > 0 \) such that for all \( x' \in \mathcal{B}(x; \epsilon) \),
\begin{equation*}
    f(x') = f_x(x') \quad \text{and} \quad \forall i \in \{1, \dots, r+1\}, \quad F^{(i)}(x') = F^{(i)}_x(x'),
\end{equation*}
where $\mathcal{B}(x; \epsilon) \subset \mathbb{R}^{d_0}$ is the ball of radius $\epsilon$ centered at $x$.
\end{lemma}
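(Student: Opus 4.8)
The plan is to exploit that a ReLU network is piecewise affine, with the pieces being precisely the polytopes $\mathcal{P}_x$. The entire statement reduces to one observation: once the activation pattern is fixed, every ReLU in the network acts as a fixed linear operator, so the composition collapses to an affine map. The second part then follows from continuity.

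For the first part I would induct on the layer index $i$, encoding the activation pattern $\mathcal{S}_f(x)$ as a family of diagonal $0/1$ matrices $D^{(i)}_x$, where the $k$-th diagonal entry is $1$ if $\eta^{(i)}_k$ is active (i.e. $[F^{(i)}(x)]_k \geq 0$) and $0$ otherwise. The key step is that for every $x' \in \mathcal{P}_x$ the activation pattern is, by definition of the polytope, identical to that of $x$; hence for each neuron $[F^{(i)}(x')]_k \geq 0$ exactly when $[F^{(i)}(x)]_k \geq 0$, and since $\mathrm{ReLU}(v)=v$ for $v \geq 0$ and $\mathrm{ReLU}(v)=0$ for $v<0$, the component-wise activation satisfies $\sigma(F^{(i)}(x')) = D^{(i)}_x F^{(i)}(x')$. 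The base case $F^{(1)}(x') = \mA^{(1)} x' + b^{(1)}$ is affine, and the inductive step $F^{(i)}(x') = \mA^{(i)} D^{(i-1)}_x F^{(i-1)}(x') + b^{(i)}$ composes an affine map with the fixed linear operator $D^{(i-1)}_x$, hence stays affine. This yields the explicit recursion $\Gamma_x^{(i)} = \mA^{(i)} D^{(i-1)}_x \Gamma_x^{(i-1)}$ and $\gamma_x^{(i)} = \mA^{(i)} D^{(i-1)}_x \gamma_x^{(i-1)} + b^{(i)}$, with $\Gamma_x^{(1)} = \mA^{(1)}$ and $\gamma_x^{(1)} = b^{(1)}$; taking $i = r+1$ gives $(\Gamma_x, \gamma_x)$ for $f = F^{(r+1)}$.

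For the second part I would use that each map $x' \mapsto [F^{(i)}(x')]_k$ is globally continuous, being a composition of affine maps and the continuous function $\mathrm{ReLU}$, and note that this map does not itself depend on any activation pattern. Since $x$ is not a critical point, $[F^{(i)}(x)]_k \neq 0$ for every pair $(i,k)$. By continuity, each strict inequality is preserved on some ball $\mathcal{B}(x; \epsilon_{i,k})$: if $[F^{(i)}(x)]_k > 0$ then $[F^{(i)}(x')]_k > 0$ nearby, and similarly in the negative case, so the active/inactive status of each neuron is locally constant. As there are finitely many neurons, $\epsilon \Def \min_{i,k} \epsilon_{i,k} > 0$, and on $\mathcal{B}(x;\epsilon)$ every neuron keeps its status, i.e. $\mathcal{S}_f(x') = \mathcal{S}_f(x)$, whence $\mathcal{B}(x;\epsilon) \subseteq \mathcal{P}_x$. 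Applying the first part then gives $f(x') = f_x(x')$ and $F^{(i)}(x') = F^{(i)}_x(x')$ for all $i$ on this ball.

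The argument is essentially routine, so I do not expect a genuine obstacle; the single point needing care is the interplay between the closed condition $\geq 0$ defining activation and the strictness required in the second part. The diagonal-matrix identity of the first part remains valid even on the boundary $[F^{(i)}(x')]_k = 0$ (where $\mathrm{ReLU}(0)=0$ is consistent with multiplying by $1$), so $\mathcal{P}_x$ may include boundary points; but stability of the activation pattern under perturbation genuinely needs strict inequalities, which is exactly why non-criticality is assumed. I would flag in the write-up that at a critical point a vanishing pre-activation can change sign in every neighborhood, so $\mathcal{P}_x$ need not contain any ball and local affinity can fail.
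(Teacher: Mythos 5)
Your proof is correct and follows essentially the same route as the paper: you encode the activation pattern via diagonal $0/1$ matrices (the paper's $I_x^{(i)}$, your $D_x^{(i)}$), observe they are constant on $\mathcal{P}_x$ so that $F^{(i)}$ collapses to the affine map $\Gamma_x^{(i)} x' + \gamma_x^{(i)}$, and deduce the local statement from continuity and the strict signs guaranteed by non-criticality. Your explicit induction and the remark about the closed condition $\geq 0$ versus the strictness needed for stability merely spell out carefully what the paper states informally, so there is nothing to correct.
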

Though ReLUs are not linear, they are piecewise linear. This means that when moving near a non-critical point, all ReLUs act as linear functions. 
Recall that critical points are solutions of an equation of the form $A^{(i)}_k \cdot  F^{(i-1)}(x) + b^{(i)}_k= 0$. 
Thus, they form hyperplanes which partition the input space into these different affine regions, hence referred to as \emph{polytopes}.

We denote by $I_x^{(i)}$ the diagonal matrix representing which neurons in layer $i$ are active for the input $x$: the $k$-th coefficient of the diagonal is $1$ if the $k$-th neuron in layer $i$ is active, and $0$ otherwise. Thus,
\begin{align*}
F^{(i)}(x) &=   I_x^{(i)}(A^{(i)} \cdots (I_x^{(2)}(A^{(2)}(I_x^{(1)}(A^{(1)}x + b^{(1)})) + b^{(2)}) \cdots + b^{(i)}) \\
&= I_x^{(i)}A^{(i)} \cdots I_x^{(2)}A^{(2)}I_x^{(1)}A^{(1)}x + \gamma^{(i)}_x = \Gamma^{(i)}_x x + \gamma^{(i)}_x.
\end{align*}
If $x$ and $x'$ are two points in the same polytope, then $\forall i, I^{(i)}_x = I^{(i)}_{x'}$ implying that $\Gamma_x^{(i)} = \Gamma_{x'}^{(i)}$ and $ \gamma_x^{(i)} = \gamma_{x'}^{(i)}$, i.e., that $F^{(i)}_x = F^{(i)}_{x'}$. 

\subsection{Adversarial Resources and Goal}
\label{assumptions}
We consider two parties in this model extraction attack: an oracle $\mathcal{O}$ and an adversary. The adversary generates queries $x$ and sends them to the oracle, which then responds with the correct output $f(x)$.

\medskip
\noindent \textbf{Adversarial resources.} We make the following assumptions regarding the target neural network and the attacker's capabilities. These are the same assumptions as in~\cite{EC:CCHRSS24,C:CarJagMir20,Hannah}:
\begin{itemize}[leftmargin=0.5cm]
\item \textbf{Fully connected ReLU network}. $f$ is a fully connected ReLU neural network.
    \item \textbf{Known architecture}. The attacker knows the architecture of the target neural network $f$.
    \item \textbf{Unrestricted input access}. The attacker can query the network on any input $x\in \mathbb{R}^{d_0}$.
    \item \textbf{Raw output access}. The oracle returns the complete raw output $f(x)$, with no post-processing.
    \item \textbf{Precise computations}. The oracle computes $f(x)$ using 64-bit arithmetic.
\end{itemize}

\smallskip
\noindent \textbf{Adversarial goal.}
The objective of the extraction is not to exactly replicate the target network, but rather to achieve what is known as an \emph{$(\epsilon, \delta)$-functionally equivalent extraction}~\cite{C:CarJagMir20}. We slightly change the original definition to normalise the difference between the target and the extracted network.
\begin{definition}[functionally equivalent extraction] \label{def:fe}
    We say that two models $f$ and $\hat{f}$ are {\normalfont $(\epsilon, \delta)$-functionally equivalent} on an input space $S \subset \mathbb{R}^{d_0}$ if $\forall x \in S, \> \mathbb{P}(|\frac{\hat{f}(x)-f(x)}{f(x)}| \leq \epsilon) \geq 1-\delta$.
\end{definition} 
\subsection{Overview of the Existing Signature Extraction}
\label{overview}
In this section, we recall the signature-extraction algorithm from~\cite{C:CarJagMir20}. 
The extraction is performed layer by layer: first, we recover the parameters of layer 1, then use them to reconstruct layer 2, and so on. We now explain the process of recovering layer $i$, assuming that the first $i-1$ layers of the target network $f$ have been correctly extracted. 

Signature extraction is carried out in five main steps: searching critical points, recovering partial signatures, merging partial signatures, finding missing entries in the signatures, and computing the bias.

\medskip 
\noindent \textbf{Random search for critical points.}
In each polytope, the network behaves affinely, meaning that the derivative of $f$ remains constant within the same polytope. Therefore, critical points can be identified when a change in the derivative occurs, indicating that a critical hyperplane has been crossed. To find these critical points, we apply a binary search along random lines in the input space. See Appendix~\ref{search for ccps} for a detailed explanation.

\medskip 
\noindent \textbf{Partial signature recovery.}
\label{partial_signature_recovery}
When we cross a neuron's critical hyperplane, we move, depending on the sign of the neuron, from a region (polytope) where the neuron does not contribute to the network’s output to one where it does. All other neurons remain unchanged. Consequently, by querying the network close to a critical point, we can construct a system of equations that isolates the contribution of the neuron in question, allowing us to recover its parameters up to a sign. We now describe this process in more detail.

Following~\cite{C:CarJagMir20}, we define the \emph{second-order differential operator} of $f$ along direction $\Delta$ as $$\partial^2_\Delta f(x) := \frac{1}{\epsilon_1}(f(x+ \epsilon_2 \Delta) + f(x - \epsilon_2 \Delta)- 2 f(x)).$$ Suppose the target layer is layer $i$. We assume that the previous layers have been extracted and thus that $\Gamma_x^{(i-1)}$ is known for all inputs $x$. 

\begin{lemma}[\cite{C:CarJagMir20}]
\label{lemma1}
Let $x$ be a critical point for a neuron $\eta_k^{(i)}$ in layer $i \in \{1, \ldots, r\} $, and assume that $x$ is not a critical point for the other neurons. Further, let $\Delta \in \mathbb{R}^{d_0}$. 
Then: 
\[
\partial^2_\Delta f (x) = c_k^{(i)} \big|A_k^{(i)}\cdot (\Gamma_x^{(i-1)} \Delta) \big|
\]
where $c_k^{(i)} \in \mathbb{R}$ is a constant.
\end{lemma}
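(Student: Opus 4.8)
The plan is to exploit the piecewise-affine structure of ReLU networks to collapse the second-order differential down to the single nonlinearity introduced by the target neuron $\eta_k^{(i)}$. First I would fix $\epsilon > 0$ small enough that, among all neurons of $f$, only $\eta_k^{(i)}$ changes its activation state across the three evaluation points $x$, $x+\epsilon\Delta$, $x-\epsilon\Delta$. This is precisely where the hypothesis that $x$ is critical for $\eta_k^{(i)}$ but for no other neuron does the work: every other neuron has a strictly nonzero pre-activation at $x$, so by continuity of the pre-activation maps there is a neighborhood on which their signs, hence their ReLU states, stay constant. The one genuinely delicate point is that neurons in layers deeper than $i$ have pre-activations depending on the output of $\eta_k^{(i)}$, which \emph{does} change; but that change is $O(\epsilon)$, so shrinking $\epsilon$ keeps those deeper neurons on the same side of their own hyperplanes as at $x$. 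I expect pinning down this uniform choice of $\epsilon$ (valid on both sides, including the deeper layers) to be the main obstacle to state cleanly.

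Next I would render the pre-activation of $\eta_k^{(i)}$ as an affine function near $x$. By the local affine lemma the map $F^{(i-1)}$ is affine on the relevant ball, so the pre-activation is $h(x') = A_k^{(i)} \cdot F^{(i-1)}(x') + b_k^{(i)} = A_k^{(i)} \cdot (\Gamma_x^{(i-1)} x' + \gamma_x^{(i-1)}) + b_k^{(i)}$, with $h(x) = 0$ by criticality. Evaluating at the perturbed points then gives $h(x \pm \epsilon\Delta) = \pm\,\epsilon\,(\Gamma_x^{(i-1)}\Delta)\cdot A_k^{(i)}$.

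Then I would isolate the single nonlinear contribution to the output. Since every layer after $i$ keeps a fixed activation pattern across all three points (Step 1), the composition of layers $i+1,\dots,r+1$ acts as one fixed affine map $a \mapsto M a + c$ on the layer-$i$ post-activation vector; likewise every layer-$i$ neuron other than $k$ is affine near $x$. Folding all the affine parts into a single affine function $\phi$ and writing $w := M e_k$ for the $k$-th column, I obtain $f(x') = \phi(x') + \max(h(x'),0)\, w$ at all three points. Substituting into the definition $\partial^2_\Delta f(x) = f(x+\epsilon\Delta) + f(x-\epsilon\Delta) - 2f(x)$, the affine term $\phi$ cancels exactly (it is annihilated by the symmetric second difference), leaving only $\big[\max(h(x+\epsilon\Delta),0) + \max(h(x-\epsilon\Delta),0) - 2\max(h(x),0)\big]\, w$.

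Finally I would close with the elementary identity $\max(y,0) + \max(-y,0) = |y|$. With $h(x) = 0$ and $h(x \pm \epsilon\Delta) = \pm\,\epsilon\,(\Gamma_x^{(i-1)}\Delta)\cdot A_k^{(i)}$, the bracket equals $\epsilon\,\big|(\Gamma_x^{(i-1)}\Delta)\cdot A_k^{(i)}\big|$, so that $\partial^2_\Delta f(x) = \epsilon\, w\, \big|(\Gamma_x^{(i-1)}\Delta)\cdot A_k^{(i)}\big|$. Setting $c_k^{(i)} := \epsilon\, w$ — a quantity assembled from $\epsilon$ and the fixed downstream weights, and crucially independent of the probing direction $\Delta$ — yields the claimed identity (read coordinatewise when the output is multidimensional, since $c_k^{(i)}$ is then the constant vector $\epsilon\,M e_k$). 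The degenerate case $(\Gamma_x^{(i-1)}\Delta)\cdot A_k^{(i)} = 0$ is automatically consistent, as both one-sided ReLU values vanish and both sides reduce to zero. The structural payoff worth flagging is that all $\Delta$-dependence is confined to the scalar $\big|(\Gamma_x^{(i-1)}\Delta)\cdot A_k^{(i)}\big|$, which is exactly what makes the signature of $A_k^{(i)}$ recoverable by varying $\Delta$.
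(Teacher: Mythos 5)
Your proposal is correct and follows essentially the same route as the paper's proof in Appendix C.1: both arguments fix $\epsilon$ small enough that only $\eta_k^{(i)}$ flips state, collapse the downstream layers into a fixed affine map, observe that the symmetric second difference annihilates every affine contribution (the paper does this coordinatewise via $C[j]=0$ for $j\neq k$, you do it by writing $f=\phi+\max(h,0)\,w$), and extract $C[k]=\big|(\Gamma_x^{(i-1)}\Delta)\cdot A_k^{(i)}\big|$ from the single surviving ReLU via $\max(y,0)+\max(-y,0)=|y|$. Your explicit treatment of the two points the paper leaves implicit---that deeper-layer activations are stable under the $O(\epsilon)$ change in $\eta_k^{(i)}$'s output, and that $c_k^{(i)}$ absorbs the fixed $\epsilon$ and is a vector $\epsilon\,Me_k$ when the output is multidimensional---is accurate and consistent with the paper's statement.
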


\begin{proof}
See Appendix~\ref{proof_lemma}.
\end{proof}

By using Lemma \ref{lemma1} in two directions, $\Delta$ and $\Delta_0$, we obtain \[
    \frac{\partial^2_{\Delta} f (x)}{\partial^2_{\Delta_0} f (x)} = \frac{\big|A_k^{(i)}\cdot(\Gamma_x^{(i-1)} \Delta) \big|}{\big|A_k^{(i)}\cdot (\Gamma_x^{(i-1)} \Delta_0) \big|}.
\]
Then, by comparing the absolute values between 
\[\frac{\partial^2_{\Delta} f (x)+\partial^2_{\Delta_0} f(x)}{\partial^2_{\Delta_0} f(x)} \text{ and }  \frac{\partial^2_{\Delta + \Delta_0} f (x)}{\partial^2_{\Delta_0} f(x)},\]we can eliminate the absolute value (see Appendix \ref{removing_absolute_values}), obtaining \[\frac{A_k^{(i)}\cdot(\Gamma_x^{(i-1)} \Delta)}{A_k^{(i)}\cdot(\Gamma_x^{(i-1)} \Delta_0)}.\]\\ 
By taking enough directions $\Delta$ in the input space, we can solve for $A_k^{(i)}$ up to a constant $c^{(i)}_k$, reconstructing the signature (see Appendix \ref{extraction_detail}). We note $\hat A_k^{(i)}$ the recovered signature of $A_k^{(i)}$. As $k$ remains unknown, the extraction permutes the rows of $A^{(i)}$: we do not know if $\hat{A}_1^{(i)} = A_1^{(i)}$ or if $\hat{A}_1^{(i)} = A_2^{(i)}$. When extracting the next hidden layer $\hat{A}^{(i+1)}$, the end-to-end attack naturally permutes the columns to match the permuted rows of $\hat{A}^{(i)}$, since $\hat{A}^{(i)}$, rather than $A^{(i)}$, is used in the extraction. In the same fashion, the next layer absorbs the constant $c^{(i)}_k$: $\hat{a}^{(i+1)}_{j,k} = a^{(i+1)}_{j,k}\cdot\frac{c^{(i+1)}_j}{c^{(i)}_k}$. The extracted last layer forces the permutation and scaling to match the target neural network, yielding a functionally equivalent extraction.

However, since $\Gamma_x^{(i-1)} = I_x^{(i-1)} A^{(i-1)}\Gamma_x^{(i-2)}$, ReLUs on layer $i-1$ are blocking some coefficients for all directions $\Delta$ we take around $x$. Therefore, instead of extracting a full (scaled) matrix row, for example $\left(a_{1}, a_{2}, a_{3}, a_4, a_5\right)$, we can only retrieve a partial signature, say, $\left(0, a_2, a_3, 0, a_5 \right)$.
\medskip

\noindent \textbf{Merging signatures into components.}
To reconstruct the full signature, we need to merge partial signatures obtained from different critical points of the same neuron $\eta_k^{(i)}$. Each recovered signature from critical points of $\eta_k^{(i)}$ is $A_k^{(i)}$ with some missing entries, scaled to an unknown factor. Assuming that no two rows of the matrix are identical, merging two signatures requires only checking whether all their shared non-zero weights are proportional. For example, consider two signatures $(\lambda a, \lambda b, 0, \lambda d, 0)$ and $(\Lambda a, \Lambda b, 0, 0, \Lambda e)$ where $\lambda$ and $\Lambda$ are constants. If $\frac{\lambda a}{\lambda b} = \frac{\Lambda a}{\Lambda b}$, they can be merged to get $(a,b,0,d,e)$, up to a constant (see Fig.~\ref{fig:merges_visual}). The resulting merged row is called a \emph{component}. The \emph{size of a component} is the number of critical points whose partial signatures merged to form that component.

\begin{wrapfigure}{r}{6.0cm}
\centering
\vspace{-0.8cm}
\begin{tikzpicture}[scale=0.9]
    \draw (0,0) rectangle (6,3);
    \draw[thick, gray]  (0.9,3) -- (1.2,1.5) -- (0.7,0.5) -- (0,0.2);
    \node at (1.2,2.75) {\textcolor{gray}{$\eta_1$}};

    \draw[thick, gray]  (5,3) -- (4,2) -- (3.2,1.5) -- (2.7,0);
    \node at (2.5,0.25) {\textcolor{gray}{$\eta_2$}};
    
    \draw[thick, gray] (6,0) -- (4.9, 1.4) -- (2.5, 2.4) -- (1.7, 3);
    \node at (5.5,0.25) {\textcolor{gray}{$\eta_5$}};

    \draw[thick, gray] (1, 0) -- (2.2,1) -- (4,3);
    \node at (1.7,0.25) {\textcolor{gray}{$\eta_4$}};

    \draw[thick, gray] (3.5,3) -- (6,1.7);
    \node at (5.7, 2.1) {\textcolor{gray}{$\eta_3$}};

    \draw[thick, red] (0, 2.5) -- (1.07,2.2) -- (2.25,2.6) -- (3.65, 2.6) -- (4, 2) -- (6,1.2);

    \filldraw[fill=black] (1.5,2.35) circle (3pt);
    \filldraw[fill=black] (3.85,2.3) circle (3pt);
\end{tikzpicture}
\caption{Input space of a network where neurons on the previous layer are labelled in grey on their active side. In red is the neuron we aim to find. The critical points on the left and right yield respectively $(\Lambda a, \Lambda b, 0, 0, \Lambda e)$ and $(\lambda a, \lambda b, 0, \lambda d, 0)$. We can infer $(a,b,0,d,e)$ up to a constant, even though no single polytope activates $\eta_1,\eta_2,\eta_4,\eta_5$ simultaneously.}
\vspace{-0.2cm}
\label{fig:merges_visual}
\end{wrapfigure}

If a component has a size greater than 2, the original attack from Carlini et al.~\cite{C:CarJagMir20} assigns it to the target layer, based on the belief that signature extraction would not produce consistent results on critical points from deeper layers (deeper layers are not expected to have such linear dependencies). However, we will explain later why this is not a correct perception, as critical points from deeper layers might indeed be merged. The process described above (randomly searching for critical points, extracting partial rows, and merging signatures) is repeated until the number of components of size $> 2$ is equal to $d_i$, the number of neurons at layer $i$.

\medskip
\noindent \textbf{Targeted search for critical points.} At this stage of the attack, we know that all selected components correspond to neurons on the target layer. However, components recovered from random critical points often have weights missing, even after merging numerous partial signatures. To complete the partial rows, a more targeted search for critical points is necessary. We start from one of the critical points within the component. As we cross different hyperplanes associated with neurons in the previous layer, we activate different weights of the target neuron, ultimately enabling its full reconstruction.

For example, in Figure \ref{fig:merges_visual}, we would follow the red hyperplane as it bends across the grey hyperplanes, retrieving enough  critical points to reconstruct the full signature of our neuron. In higher dimensions, we follow directions that are more likely to trigger the weights we have not found yet. The exact procedure is described in \cite{C:CarJagMir20} and further improved in \cite{Hannah}.

\medskip 
\noindent \textbf{Recovering the bias.}
Once we have recovered the signature of $\eta^{(i)}_k$, $c^{(i)}_k A^{(i)}_k$, it suffices to pick a critical point $x$ of the component to compute the corresponding scaled bias $c^{(i)}_kb^{(i)}_k$ using the equation
$(c^{(i)}_kA^{(i)}_k) \cdot F^{(i-1)}(x)+c^{(i)}_kb^{(i)}_k=0$.

\medskip 
\noindent \textbf{Last layer recovery.}
\label{last layer recovery}
The oracle returns the complete raw outputs of $f$ and the last output layer is a linear layer. Therefore, its extraction is straightforward. No additional query is needed for extraction, as we can reuse previously queried critical points to build the linear system and recover the last linear layer's weights.

\subsection{Overview of Existing Sign-Extraction Methods}
\label{sign explanation}
After recovering each neuron's signature in layer up to a constant $c^{(k)}_i$, we must determine the sign of each constant. Canales-Martínez et al.~\cite{EC:CCHRSS24} propose two complementary sign-extraction techniques: the \emph{system-of-equations} (SOE) approach and the \emph{neuron wiggle}. 
We briefly summarize both methods. 
\medskip

\noindent \textbf{SOE}. The system-of-equations method aims at contractive layers, where it recovers the signs of every neuron in the target layer together.
Locally around an input $x$, for a small perturbation
direction $\Delta_k$, we have
\begin{equation*}
\label{eq:soe_local}
f(x+\Delta_k) - f(x) \;=\; G^{(i+1)}_x \, I^{(i)}_x \, A^{(i)} \, F^{(i-1)}_x \, \Delta_k,
\end{equation*}
where $G^{(i+1)}_x$ is the linear contribution of deeper layers at $x$,
$A^{(i)}$ is the weight matrix of the target layer, $I^{(i)}_x$ is the diagonal
activation mask of layer $i$ at $x$, and $F^{(i-1)}_x$ is the local extracted linear map
up to layer $i-1$.
To find the signs of all neurons, we want to find $I^{(i)}_x$. Equivalently, we want to find which entries of $G^{(i+1)}_xI^{(i)}_x$ are zero. Therefore we build the following system:
\[
\{c\cdot y_k = z_k\}_k
\]
where $c = G^{(i+1)}_xI_x^{(i)}$ is the variable we are looking for, $y_k = A^{(i)}F^{(i-1)}_x\Delta_k$ and $z_k= f(x+\Delta_k) - f(x)$. To solve SOE, we thus need that $\text{rank}(F^{(i-1)}_x)\geq \text{rank}(A^{(i)})$. This technique can therefore be used as long as the network stays very contractive. It is thus usually applicable only for the first layer.\\

\noindent \textbf{Neuron Wiggle}. For non-contractive layers, the neuron wiggle can be used to recover neuron signs individually. Consider a change in input $\Delta_k$ sufficiently small to stay within the linear region but which maximises $\hat{\eta_k}(x^*+\Delta_k)$. The general intuition is that the ReLU blocks the change caused by $\Delta_k$ on the inactive side of $\eta$. Therefore if our sign guess is correct, the output should display a large change. If our sign guess is incorrect, we shouldn't see a particular change. We can observe a correct change only if we can sufficiently maximise $\hat{\eta_k}(x^*+\Delta_k)$ 
without maximising at the same time other neurons. We refer the reader to the original article~\cite{EC:CCHRSS24} for further details. This makes the neuron wiggle a probabilistic method which depends on many factors, notably the architecture of the network. As a consequence, the neuron wiggle is ran on many critical points $x^*$ of $\eta_k$ and the sign is chosen according to the overall results, with a certain confidence level $\alpha$. $\alpha=1$ indicates that all points agree, whereas $\alpha=0$ implies an even split. Canales-Martínez et al. report very convincing results on a $3072-256^{(8)}-10$ network, recovering the sign of all but ten neurons in this very large network. The neuron wiggle is further scrutinised in~\cite{Hannah}, where the authors recommend running an exhaustive search on the signs of the neurons with confidence level $\alpha <0.75$.

\begin{figure}[htb!]
\centering
\scalebox{0.8}{
\begin{tikzpicture}[node distance=0.70cm, mybox/.style={draw=black, thick, rounded corners, inner sep=0.5cm}]
\node (search2) [rectangle, draw] at (5,0) {random search};
\node[draw=RawSienna, very thick] (noise1) [rectangle, draw, below of=search2] {discard deeper points};
\node (signature2) [rectangle, draw, below of=noise1] {partial signature recovery};
\node[draw=pastelred, very thick] (merge2) [rectangle, draw, below of=signature2] {signature intersections};

\node (merge1) [rectangle, draw, below of=merge2] { signature merges};
\node[draw=RawSienna, very thick]  (signal2) [rectangle, draw, below of=merge1] {discard deeper components};
\node (targeted2) [rectangle, draw, below of=signal2] {targeted search};
\node[draw=black] (precision2) [rectangle, draw, below of=targeted2] {precision improvement};
\node[draw=none] (dummy2) at ($(signature2)+(1.6,0)$) {};

\draw[->] (search2) -- (noise1);
\draw[->] (noise1) -- (signature2);
\draw[->] (signature2) -- (merge2);
\draw[->] (merge2) -- (merge1);
\draw[->] (merge1) -- (signal2);
\draw[->] (signal2) -- (targeted2);
\draw[->] (targeted2) -- (precision2);

\node[draw=none] (topa1) at ($(search2)+(1.425,0.35)$) {};
\node[draw=none] (bottoma1) at ($(search2)+(1.425,-0.35)$) {};

\node[draw=none] (topa2) at ($(signature2)+(2.2,0.35)$) {};
\node[draw=none] (bottoma2) at ($(signature2)+(2.2,-0.35)$) {};
\node (textta2) [draw=none] at ($(topa2)+(1.75,-0.15)$) {\color{black}improved precision};
\node (textba2) [draw=none] at ($(bottoma2)+(1.75,0.15)$) {\color{black}(adjust scaling factors)};
\draw[->] (topa2) -- (bottoma2);

\node[draw=none] (topa3) at ($(precision2)+(2,0.35)$) {};
\node[draw=none] (bottoma3) at ($(precision2)+(2,-0.35)$) {};
\node (textta3) [draw=none] at ($(topa3)+(1.6,-0.15)$) {\color{black}improved precision};
\node (textba3) [draw=none] at ($(bottoma3)+(1.6,0.15)$) {\color{black}(avoid flat regions)};
\draw[->] (topa3) -- (bottoma3);

\node(search3) [rectangle, draw] at (0,0) {random search};
\node(empty1) [below of=search3]{};
\node (signature3) [rectangle, draw, below of=empty1, fill = pastelred] {\color{white}partial signature recovery};
\node(empty2) [below of=signature3]{};
\node (merge3) [rectangle, draw, below of=empty2] {signature merges};
\node (condition3) [rectangle, draw, below of=merge3, fill = RawSienna] {\color{white}$d_i$ $>2$ - sized components};
\node (targeted3) [rectangle, draw, below of=condition3] {targeted search};
\node (precision3) [rectangle, draw, below of=targeted3] {precision improvement};
\node[draw=none] (dummy3) at ($(signature3)+(-2.1,0)$) {};
\draw[->] (search3) -- (signature3);
\draw[->] (signature3) -- (merge3);
\draw[->] (merge3) -- (condition3);
\draw [->] (condition3) to [out=180,in=180] (search3);
\draw [->] (condition3) to (targeted3);
\draw [->] (targeted3) to (precision3);

\node[mybox, fit=(dummy2)(dummy3)(search2)(noise1)(signature2)(merge2)(signal2)(targeted2)(precision2)(search3)(empty1)(empty2)(merge3)(condition3)(targeted3)(precision3)(textta2),
      label=above:{}, 
      inner sep=0.6cm] {};

\end{tikzpicture}
}
\caption{
    Left: Original signature extraction from \cite{C:CarJagMir20}. Right: Proposed improvements. Two error-inducing steps in the original attack are coloured on the left. Improvements match the colour of the step they address. Further precision improvements for signature extraction are marked on the right.
}
\label{fig:strategies}
\end{figure}

\section{Improving Signature Extraction} \label{sec:strategies}

To the best of our knowledge, all prior work has been able to apply signature extraction on very shallow neural networks only (3 layers maximum). 
Specifically, according to Table 1 in~\cite{C:CarJagMir20}, the deepest model they attempted to attack had an architecture of $40-20-10-10-1$, consisting of only $3$ hidden layers. On the other hand, Table 2 in~\cite{Hannah} shows that they attempted to attack a model with an architecture of $784-16^{(8)}-1$, which had $8$ hidden layers. However, despite running the attack for over $36$ hours, they failed to recover the fourth hidden layer. Although~\cite{EC:CCHRSS24} demonstrated that their sign extraction was effective across all layers of a complex $3072-256^{(8)}-10$ model, they supposed in their experiments that the signatures had been at this stage correctly extracted, without running the two processes together. 

Motivated by these limitations, we analyze the signature extraction method and identify two challenges preventing its successful application on deeper layers. We then propose solutions to overcome them. The left part of Fig.~\ref{fig:strategies}  illustrates the signature extraction workflow and highlights where these challenges arise.

\begin{figure}[htb!]
    \centering
    \begin{subfigure}{0.48\textwidth}
        \centering
        \includegraphics[width=\linewidth]{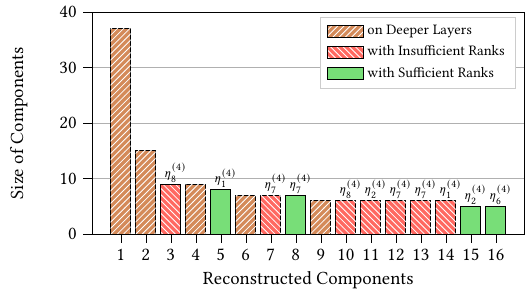}
        \caption{Original signature extraction.}
        \label{fig:benchmark}
    \end{subfigure}
    \hfill
    \begin{subfigure}{0.48\textwidth}
        \centering
        \includegraphics[width=\linewidth]{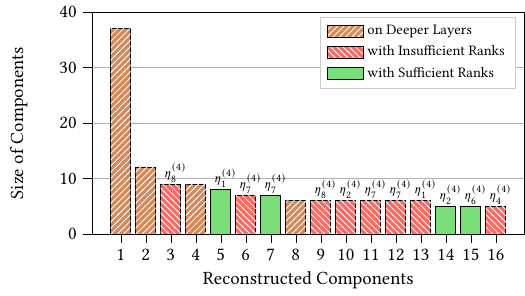}
        \caption{After discarding deeper points.}
        \label{fig:noise_filtered}
    \end{subfigure}
    \vspace{0.7cm}

    \begin{subfigure}{0.48\textwidth}
        \centering
        \includegraphics[width=\linewidth]{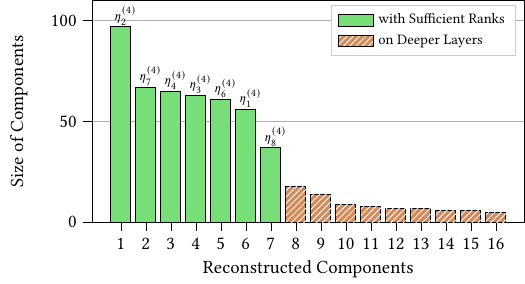}
        \caption{After signature intersections.}
        \label{fig:noise_filtered_rank_solved}
    \end{subfigure}
    \hfill
    \begin{subfigure}{0.48\textwidth}
        \centering
        \includegraphics[width=\linewidth]{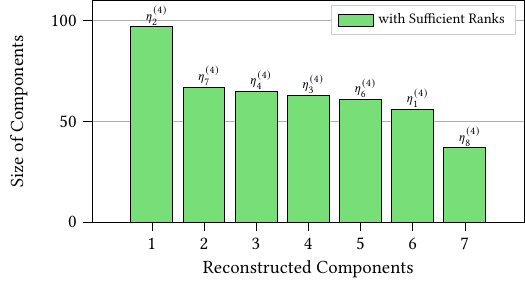}
        \caption{After discarding deeper components.}
        \label{fig:final_results}
    \end{subfigure}
    \vspace{0.3cm}
    
    \caption{Gradual improvement of signature extraction results on layer 4 of Model II with $3,000$ critical points (due to space constraints, only the largest 16 components are displayed). Each component on the target layer is labelled with its associated neuron on top. (a) Original signature extraction from \cite{C:CarJagMir20}. (b) After discarding deeper critical points (see Section~\ref{subsec:filteralgo}), deeper components either disappear or have a smaller size. (c) After intersecting critical points with insufficient ranks (see Section~\ref{rankissue}), components with rank deficiency disappear. (d) After discarding deeper components (see Section~\ref{subsec:filteralgo}), all remaining components are in the target layer. The only unrecovered component corresponds to an always-off neuron $\eta^{(4)}_5$.}
    \label{fig:deeper_layers_influence}
\end{figure}

We provide now a brief overview of each failure. First, the signature extraction relies on solving a system of equations to recover the weights of the target neuron associated with all active neurons in the previous layer. 
However, as we go deeper into the network, the rank of this system might not match the number of active neurons on the previous layer. This mismatch can lead to an underdetermined system, resulting in incorrect signature extraction. Such components are in red in Fig. \ref{fig:deeper_layers_influence}.

Second, the component selection of the largest $d_i$ components of size greater than 2 is not a good approach. Indeed, our experiments show that this strategy is only valid for shallow networks where there are comparatively fewer critical points on deeper layers. The right part of Fig.~\ref{fig:strategies} gives an outline of our improvements. Components from deeper layers are in brown in Fig. \ref{fig:deeper_layers_influence}.

Our methods for resolving the issues of incorrect signature extraction and deeper components are given respectively in Section~\ref{rankissue} and Section~\ref{subsec:filteralgo}. We show in Fig.~\ref{fig:deeper_layers_influence}, as an illustration, how each improvement we propose allows us to gradually reach a correct extraction the 4th layer of Model II with architecture $784-8^{(8)}-1$. The same approach naturally extends to subsequent layers.

Moreover, we present in Section~\ref{precision_improvements} several numerical precision‑improvement strategies that increase extraction efficiency, and provide additional precision refinements for wide layers in the context of an end-to-end attack. The rightmost part of Fig.~\ref{fig:strategies} indicates when they are applied.

\subsection{Increasing Rank with Subspace Intersections}
\label{rankissue}
\leavevmode
\begin{wrapfigure}{r}{5.5cm}
\centering
\vspace{-0.2cm}
\begin{tikzpicture}[
    scale=0.65, 
    every node/.style={scale=0.9}, 
    node distance={12mm}, 
    minimum size=0.1cm, 
    main/.style = {draw, circle}
]
\node[main, fill=gray] (1) at (0,1) {};
\node[main, fill=gray] (2) at (0,2) {};

\node[main, opacity=0.5] (3) at (2,0) {};
\node[main, fill=blue] (4) at (2,1) {};
\node[main, fill=blue] (5) at (2,2) {};
\node[main, opacity=0.5] (6) at (2,3) {};

\node[main, fill=blue] (7) at (4,0) {};
\node[main, fill=blue] (8) at (4,1) {};
\node[main, opacity=0.5] (9) at (4,2) {};
\node[main, fill=blue] (10) at (4,3) {};

\node[main, opacity=0.5] (11) at (6,0) {};
\node[main, opacity=0.5] (12) at (6,1) {};
\node[main, opacity=0.5] (13) at (6,2) {};
\node[main, fill=black] (14) at (6,3) {};

\node at (7, 0) {$\cdots$};
\node at (7, 1) {$\cdots$};
\node at (7, 2) {$\cdots$};
\node at (7, 3) {$\cdots$};
\draw[->, opacity=0.5] (1) -- (3);
\draw[->, blue] (1) -- (4);
\draw[->, blue] (1) -- (5);
\draw[->, opacity=0.5] (1) -- (6);
\draw[->, opacity=0.5] (2) -- (3);
\draw[->, blue] (2) -- (4);
\draw[->, blue] (2) -- (5);
\draw[->, opacity=0.5] (2) -- (6);

\draw[->, opacity=0.5] (3) -- (7);
\draw[->, opacity=0.5] (3) -- (8);
\draw[->, opacity=0.5] (3) -- (9);
\draw[->, opacity=0.5] (3) -- (10);
\draw[->, blue] (4) -- (7);
\draw[->, blue] (4) -- (8);
\draw[->, opacity=0.5] (4) -- (9);
\draw[->, blue] (4) -- (10);
\draw[->, blue] (5) -- (7);
\draw[->, blue] (5) -- (8);
\draw[->, opacity=0.5] (5) -- (9);
\draw[->, blue] (5) -- (10);
\draw[->, opacity=0.5] (6) -- (7);
\draw[->, opacity=0.5] (6) -- (8);
\draw[->, opacity=0.5] (6) -- (9);
\draw[->, opacity=0.5] (6) -- (10);

\draw[->, opacity=0.5] (7) -- (11);
\draw[->, opacity=0.5] (7) -- (12);
\draw[->, opacity=0.5] (7) -- (13);
\draw[->, blue] (7) -- (14);
\draw[->, opacity=0.5] (8) -- (11);
\draw[->, opacity=0.5] (8) -- (12);
\draw[->, opacity=0.5] (8) -- (13);
\draw[->, blue] (8) -- (14);
\draw[->, opacity=0.5] (9) -- (11);
\draw[->, opacity=0.5] (9) -- (12);
\draw[->, opacity=0.5] (9) -- (13);
\draw[->, opacity=0.5] (9) -- (14);
\draw[->, opacity=0.5] (10) -- (11);
\draw[->, opacity=0.5] (10) -- (12);
\draw[->, opacity=0.5] (10) -- (13);
\draw[->, blue] (10) -- (14);
\node[above of=1, node distance=1.35cm] (12) {\scriptsize{Input $x$}};
\end{tikzpicture}
\caption{Solving for a neuron’s weights from certain critical points can yield an underdetermined system.
The target neuron is shown in black; active neurons are in blue. At input $x$, $\operatorname{rank}(\Gamma^{(1)}_x)=2$, so $\operatorname{rank}(\Gamma^{(2)}_x)\le 2$ even though three neurons are active in layer $2$; consequently, the layer-2 system has no unique solution.}
\label{fig:rank_network}
\end{wrapfigure} 
\noindent \textbf{The issue: insufficient rank.} When extracting the signature, our goal is to recover a partial signature $y_x \in \mathbb{R}^{d_{i-1}}$ of a neuron $\eta^{(i)}$ at its critical point $x$.
Consequently, the solution space should be of dimension 1. To determine $y_x$, we solve $(\Gamma_x^{(i-1)} \Delta)\cdot y_x = \partial^2_{\Delta} f(x)$ using a sufficient number of random directions $\Delta \in \mathbb{R}^{d_0}$.
The resulting partial row $y_x$ consists of the coefficients for the neurons in layer $i-1$ that are active at input $x$, entries on inactive indices are zero. We denote by $s_x^{(i-1)}$ the number of active neurons in layer $i-1$ and by 
$r^{(i-1)}_{x}$ 
the rank of the map $\Gamma^{(i-1)}_{x}\in\mathbb{R}^{d_{i-1}\times d_0}$. 
If $r_x^{(i-1)} < s_x^{(i-1)}$, the system is underdetermined and $y_x$
cannot be uniquely determined.
This typically occurs when some earlier layer
$k<i-1$ has fewer active neurons than layer $i-1$ (see
Fig.~\ref{fig:rank_network}), since
$
    \Gamma_x^{(i-1)} = I_x^{(i-1)} A^{(i-1)} \cdots I_x^{(1)} A^{(1)}   
$
and thus:
\[
\text{rank}(\Gamma_x^{(i-1)}) \leq \min(\{\text{rank}(I_x^{(k)})\}_{1\leq k \leq i-1}) \neq \text{rank}(I_x^{(i-1)}) \text{ in general.}
\]

If $X$ denotes the random variable representing the number of active neurons per layer, then $\mathbb{P}(r_x^{(i-1)}< s_x^{(i-1)}) = 1 - \mathbb{P}(X\geq s_x^{(i-1)})^{i-1}$.
Because of the exponent $(i-1)$, this (failure) probability approaches $1$
rapidly with depth, making extraction beyond the first few layers increasingly unlikely.

\medskip
\noindent \textbf{Signature intersections.} If the rank is insufficient, we no longer have a unique solution but rather a space of solutions. 
For a critical point $x$ in the layer $i$, the solution space is given by $
\mathcal{S}_x = L_x + \ker(\Gamma_x^{(i-1)})$, where $ L_x$ is any particular solution and $\ker(\cdot)$ denotes the kernel.  Suppose we have another critical point $x'$ with solution space $
\mathcal{S}_{x'} = L_{x'} + \ker(\Gamma_{x'}^{(i-1)}).$ We want to intersect $\mathcal{S}_x$ and $\mathcal{S}_{x'}$ into a space of smaller dimension containing $A^{(i)}_k$. Doing this for multiple critical points should give us a space of dimension $1$, yielding $A^{(i)}_k$ up to a scaling factor.

Let's study this in more details. Since we recover rows up to a scalar factor, we seek to compute the intersection of $\mathcal{S}_x$ and $\lambda \mathcal{S}_{x'}$, where $\lambda$ is a scalar factor equal to the ratio of the row scalar factors of $x$ and $x'$. Let $(e_1, \ldots, e_k)$ be a basis of $\ker(\Gamma_x^{(i-1)})$, and let $(f_1, \ldots, f_m)$ be a basis of $\ker(\Gamma_{x'}^{(i-1)})$. We then solve the following system in $\mathbb{R}^{d_{i-1}}$:
\[ \label{eq:solutionspace}
L_x + \mu_1 e_1 + \dots + \mu_k e_k = \lambda L_{x'} + \lambda_1 f_1 + \dots + \lambda_m f_m,    
\]
where the unknowns are $\mu_1, \dots, \mu_k$ and $ \lambda, \lambda_1, \dots, \lambda_m \in \mathbb{R}$. This system consists of $d_{i-1}$ equations in $\mathbb{R}$. 
However, only the equations corresponding to the active neurons in the layer $i-1$ for both inputs $x$ and $x'$ are relevant. To ensure that we do not merge spaces from critical points associated with different neurons, we overdetermine the system. Indeed, an overdetermined system with random coefficients is inconsistent with very high probability, and thus a system of $\ell>N$ equations with $N$ unknowns typically doesn't have a solution except if there is some ground truth behind it. Therefore, we impose the following merging condition: 
\[
    \ell > 1 + |\ker(\Gamma_x^{(i-1)})| + |\ker(\Gamma_{x'}^{(i-1)})|=1+k+m, 
\]
where $\ell$ is the number of relevant equations. \\
\indent Intersections allow us to exploit at least 90\% of critical points on the target layer when attacking for example the Model II. By comparison,
the original attack gives a uniquely solvable system (and thus a correct signature) for only about \(6\%\) of layer-4 critical points, with even worse results in deeper layers.  We could intersect subspaces three by three to have a lower condition for merging, allowing for the use of more critical points. However, this would considerably slow down the attack. For this reason, all our results use pairwise merges.

We provide in Fig.~\ref{fig:noise_filtered_rank_solved} an experiment showing how intersecting the critical-point solution spaces prunes components affected by rank deficiency, making the correct ones stand out. 

\subsection{Deeper Layers' Influence on Extraction}
\label{subsec:filteralgo}

While going through the signature extraction procedure, we made the crucial assumption that $x$ is a critical point of a neuron on the layer we are targeting. Since we are trying to extract that layer, we cannot verify this assumption. 
We might be extracting a neuron on a deeper layer. The authors of \cite{C:CarJagMir20} claim that it is exceedingly unlikely that signatures extracted from critical points on a deeper layer can be merged into a component. They conclude that a component of size greater than two is on the target layer. In this section, we first explain why signatures from deeper layers can indeed be merged. Second, we propose a two-step solution which consists in discarding deeper points, then deeper components. Finally we discuss how noise behaves in very large networks.

\medskip
\noindent \textbf{The issue: deeper merges.} Assume that we are trying to extract layer $i$. From two critical points $x_1$ and $x_2$ of the same neuron in a deeper layer $i+t$, we solve respectively for partial weights $y_1$ and $y_2$ using the preceding layer $i+t-1$ in the systems below. Since $x_1$ and $x_2$ are critical points of the same neuron, and we are extracting from the preceding layer, we know that $y_1$ and $y_2$ can be merged. We'll see that under a specific scenario, the extraction from $x_1$ and $x_2$ can also be merged when using layer $i-1$ for extraction, inducing noise in our extraction. 
\begin{align*}
\partial^2_\Delta f (x_1) &= (\Gamma_{x_1}^{(i+t-1)} \Delta) \cdot y_1\\
\partial^2_\Delta f (x_2) &= (\Gamma_{x_2}^{(i+t-1)} \Delta) \cdot y_2
\end{align*}
Let's write $A_{x_1}$ the matrix $I^{(i+t-1)}_{x_1} \circ A^{(i+t-1)}\circ I^{(i+t-2)}_{x_1}\circ \dots \circ I^{(i)}_{x_1}\circ A^{(i)}$ and $A_{x_2}$ the matrix $I^{(i+t-1)}_{x_2} \circ A^{(i+t-1)}\circ I^{(i+t-2)}_{x_2}\circ \dots \circ I^{(i)}_{x_2}\circ A^{(i)}$.
Thus, using $A_{x_1}$ and $A_{x_2}$, we can rewrite
\begin{align*}
\partial^2_\Delta f (x_1) &= [(A_{x_1}\circ\Gamma_{x_1}^{(i-1)}) \Delta] \cdot y_1\\
\partial^2_\Delta f (x_2) &= [(A_{x_2}\circ\Gamma_{x_2}^{(i-1)}) \Delta] \cdot y_2
\end{align*}
Now suppose that $x_1$ and $x_2$ have the same activation pattern between layers $i$ and $i+t-1$, meaning they set the same neurons as active. In this case, $A_{x_2} = A_{x_1}$, which we write as $A$. Therefore, our systems are as follows.
\begin{align*}
\partial^2_\Delta f (x_1) &=  [(A\circ\Gamma_{x_1}^{(i-1)}) \Delta] \cdot y_1\\
\partial^2_\Delta f (x_2) &=  [(A\circ\Gamma_{x_2}^{(i-1)}) \Delta] \cdot y_2
\end{align*}
These systems respectively yield the same solutions as:
\begin{align*}
\partial^2_\Delta f (x_1) &= (\Gamma_{x_1}^{(i-1)} \Delta) \cdot (A^{\top}y_1)\\
\partial^2_\Delta f (x_2) &= (\Gamma_{x_2}^{(i-1)} \Delta) \cdot (A^{\top}y_2).
\end{align*}
Since $y_1$ and $y_2$ can merge, so can $A^{\top}y_1$ and $A^{\top}y_2$. These systems correspond exactly to the partial signature extraction from $x_1$ and $x_2$ when extracting layer $i$. This is why the partial signatures extracted from $x_1$ and $x_2$ can merge into a component even though they are not on the target layer. We call all these unwanted additional components from deeper layers \emph{noise components}.

\begin{wrapfigure}{R}{5.5cm}
\centering
\scalebox{0.85}{
\begin{tikzpicture}[scale=1.0]
    \draw (0,0) rectangle (6,3);
    
    \draw[black]  (0.8,3) -- (1,1.5) -- (0.5,0.6) -- (1,0);
    \node[right] at (0.8,2.7) {\textcolor{black}{$\eta_1^{(i-1)}$}};

    \draw[black]  (4.8,3) -- (5.8,1.5) -- (5.5,0.6) -- (4,0);
    \node[right] at (5,2.7) {\textcolor{black}{$\eta_2^{(i-1)}$}};

    \draw[blue]  (2.7,3) -- (3.9,2.2) -- (4.75, 0.3) --(6,0.1);
    \node[right, blue] at (3.3, 2.7) {\textcolor{blue}{$\eta^{(i+j)}$}};
    \draw[red]  (0,1.7) -- (0.75,1.05) -- (4.4,1.05) -- (5.47, 2) --(6,2.2);
    \draw[dashed, thick, red] (0.75,1.05) -- (5.65,1.05);
    \node[] at (3,1.3) {\textcolor{red}{$\eta^{(i+k)}$}};
    \filldraw[fill=black] (1.75,1.05) circle (3pt);
    \node[] at (1.75,0.80) {\textcolor{black}{$x$}};
    \filldraw[fill=black] (5.65,1.05) circle (1.5pt);
    \node[] at (5.8,0.80) {\textcolor{black}{$a_2$}};
    \filldraw[fill=black] (0.75,1.05) circle (1.5pt);
    \node[] at (0.9,0.80) {\textcolor{black}{$a_1$}};
\end{tikzpicture}}
\caption{\label{fig:iff_noise_test}
Identifying if $x$ is on the target layer. $0\leq j<k$. By finding that the intersection point $a_2$ is not on the extracted hyperplane, we infer that the hyperplane we extracted from $x$ ({\color{red}- - -}) broke on a layer $i+j$ we did not extract ({\color{blue}---}). Thus, $x$ cannot be on layer $i$. 
Yet, finding that $a_1$ is on the extracted hyperplane does not give any information about $x$'s layer.
}
\end{wrapfigure}

\medskip
\noindent \textbf{Discarding deeper points.} To discard points on deeper layers, we recycle and improve an algorithm from Section 4.4.2 of~\cite{C:CarJagMir20} that was only used in the context of the targeted search. For each critical point found, the algorithm process involves: 1) computing a large number of distinct intersection points (we use 100 in practice) between the hyperplane extracted from the critical point under evaluation and the extracted network, and 2) verifying on the target neural network whether these intersection points still lie on the extracted hyperplane. One intersection point not belonging to the extracted hyperplane indicates that the hyperplane has bent on a neuron that has not yet been extracted, and hence that the extracted hyperplane is on a deeper layer. This test is not an if-and-only-if condition, as a hyperplane not breaking does not guarantee that it corresponds to a neuron on the target layer, see Fig.~\ref{fig:iff_noise_test}. This method reduces the noise, but it is far from sufficient, see Fig.~\ref{fig:noise_filtered}. We refer the reader to the original description for a more thorough account of the strategy.

\medskip
\noindent \textbf{Discarding deeper components.} \\ After discarding points on deeper layers, we compute intersections as described in Section~\ref{rankissue}, see Fig.~\ref{fig:strategies}. Intersections deal with components having insufficient rank. However, some components on deeper layers remain, see Fig.~\ref{fig:noise_filtered_rank_solved}. We use three criteria to discard them. First, we know that critical points that merge usually belong to the same neuron, regardless of whether they are in the target layer or a deeper layer. Therefore, the more critical points identified from deeper layers by the above test that merge with our candidate component, the more likely it is that this component belongs to a deeper layer. For this reason, we compute for each component a noise ratio:
$$\tau = \frac{\text{\#merges with deeper critical points}}{\text{size of the component}}.$$
Second, because merges of deeper critical points come from critical points with the same partial activation pattern, deeper components have a harder time obtaining a diverse set of critical points to yield a neuron with a very small number of entries unrecovered. Finally, because of the constraint on the activation pattern of deeper merges, deeper components tend to be smaller in size. While none of these criteria is an if-and-only-if test, combining them yields satisfying results.

Therefore, we first remove components with a size smaller than a fraction of the largest component (we found that in practice, $0.1$ was sufficient for networks with eight hidden layers). Then, we discard components that either have a $\tau$ above a certain threshold (we use $0.1$ and $0.2$ for networks with width $8$ and $16$, respectively) or a large number of entries unrecovered (at least half of them) if $\tau>0$. Notably, making a strict association between deeper critical points and incorrect components can lead to mislabelling (example in Appendix~\ref{proof_noise2}).

\medskip

\noindent\textbf{Noise in large networks.} We have shown above that components coming from deeper layers result from critical points with the same activation pattern between the target layer and their layer. As the width of the network increases, the neurons on each layer cut the input space into smaller polytopes, making it more unlikely that critical points share the same partial activation pattern, up to the point of noise not being an issue anymore for a large enough width. For example, when extracting layer $4$ of a $784-256^{(16)}-1$ network with $100,000$ critical points, we get 189 components of size larger than~$2$. 
Only three of these do not belong to the target layer, with sizes 
$3$, $3$, and $4$. By contrast, $177$ components have 
size at least $5$, and $157$ have size at least $20$.
This shows that, in wide networks, imposing a minimum component size 
is an effective way to filter out noise.

As network width grows, intersection computations become a problem. With width $256$, each critical point gives many recovered weights. This allows previously unavailable statistical arguments by viewing recovered weights as realisations of random variables. Indeed, the contribution of a neuron on a deeper layer corresponds to a very different mathematical expression from a neuron on the target layer. The deeper the neuron we are actually extracting upon, the higher the expected variance of the recovered weights is. Therefore, recovered neurons with unusual high variance can immediately be deemed to be on a deeper layer, without requiring any queries or computation, thereby increasing the proportion of critical points on the target layer. The time and queries for finding critical points grows linearly with the number of points we require, while computing intersections is comparatively very slow and grows polynomially with the number of points processed. For this reason increasing the proportion of critical points on the target layer, even if we discard some of them, results in a considerable speed-up of the attack. See Appendix \ref{Appendix Noise for large networks} for proofs, experiments and more details.

\subsection{Numerical Precision in the Context of an End-to-End Attack}
\label{precision_improvements}
In practical end-to-end attacks, model extraction relies on finite-precision arithmetic (typically 64-bit floating-point). However, many algorithms in the extraction process are numerically unstable, introducing small errors in the recovered weights. Such errors accumulate across layers and block the extraction of deeper layers. Thus, it is crucial to refine the recovered weights to minimise imprecision. 

Carlini et al.~\cite{C:CarJagMir20} introduce a method to improve the precision of recovered signatures. It consists in finding more critical points, at least $d_{i-1}$ (the number of neurons in layer $i-1$) for each neuron $\eta_k^{(i)}$ in the target layer $i$, and then solving the following linear system for its weight vector $\hat{A}^{(i)}_k$ and bias $\hat{b}^{(i)}_k$:

\begin{equation}\tag{$\star$}\label{eq:precision_system}
\begin{pmatrix}
h_1 & 1\\
h_2 & 1\\
\multicolumn{2}{c}{\cdots} \\
h_{d_{i-1}} & 1
\end{pmatrix}
\begin{pmatrix}
\hat{A}^{(i)}_k \\
\hat{b}^{(i)}_k
\end{pmatrix}
= 0
\end{equation}
where $h_j=\hat{F}^{(i-1)}(x_j)$ and $x_j$ is a critical point for neuron $\eta_k^{(i)}$. This leads to precision improvements as weights given by components come from multiple imprecision-inducing computations (derivatives, system solving, etc.) which themselves depend on the precise identification of critical points.

To find more critical points, Carlini et al.~\cite{C:CarJagMir20} use a kind of targeted search. Foerster et al.~\cite{Hannah} note that this search often struggles in practice and propose refinements, but these add numerous parameters hindering the reproducibility and substantially increasing the runtime. They report precision-improvement runtimes of 33 times that of signature extraction and 17 times that of sign extraction. For this reason,~\cite{Hannah} recommends avoiding precision improvements, while noting that it is a ``remaining concern'' for an end-to-end attack. 

Our improved signature extraction can efficiently avoid this issue. Specifically, we explicitly generate a larger pool of critical points at the beginning, a number far exceeding that required in Carlini et al.~\cite{C:CarJagMir20}. These points primarily serve to filter deep points and components in the early extraction steps, and are sufficient to be reused for precision refinement, eliminating the need for a dedicated, parameter-heavy targeted search. Our precision improvement step therefore remains a small fraction of the total extraction.

Carlini et al.'s precision improvement~\cite{C:CarJagMir20} is very potent but can only be applied at the end of the extraction of each layer. In other words, the attack per layer is still ran on imprecise numbers. Let's see why this can be a problem, before presenting our improvement to Carlini et al.'s method for wide layers.

\medskip

\noindent \textbf{Imprecision due to scaling factors affects extraction efficiency.} When recovering a partial weight-vector $y_x$ at a critical point $x$ from a linear equation system $(\hat{\Gamma}_x^{(i-1)} \Delta)\cdot y_x = \partial^2_{\Delta} f(x)$, each output from the previous layer is scaled by a neuron-specific factor, denoted as $(c_1^{(i)},c_2^{(i)},\cdots,c_{d_i}^{(i)})$. Large disparities among these scaling factors can make the matrix $\hat{\Gamma}_x^{(i-1)} \Delta$ ill-conditioned, amplifying numerical errors so that even tiny errors in recovered weights $\hat{\Gamma}_x^{(i-1)}$ cause large deviations in the partial weight-vector $y_x$. Critical points with such imprecise $y_x$ cannot be merged into components, forcing the collection of more critical points initially to assemble a component capable of recovering all desired weights. As discussed in Section~\ref{subsec:filteralgo}, computing intersections is already slow and grows polynomially with the number of points processed; imprecise partial weights further increase the number of queries and the cost of signature intersections, reducing overall efficiency.

To address this, we normalize each column of $\hat{\Gamma}_x^{(i-1)} \Delta$ by multiplying it by a secondary scaling factor $(c_1^{(i)'},c_2^{(i)'},\cdots,c_{d_i}^{(i)'})$, bringing all elements within $[-1,1]$. After solving the normalized linear equation system, these factors are multiplied back onto the solution to obtain the recovered signature. This normalization ensures more stable and precise signature extraction, see Appendix~\ref{sclaing_factor_precision} for an example. For instance, when our normalization method is applied to layer 3 of Model II, the precision of partial weights improves by roughly 10 times. The proportion of critical points merged into components increases from $88.9\%$ to $96.2\%$, capturing nearly all points associated with the target-layer neurons.

\medskip

\noindent \textbf{Imprecision from critical points in large networks.} 
We identify critical points by computing expected intersections of lines given by output derivatives, see Appendix \ref{search for ccps}. The search for critical points can therefore be imprecise if the change in linearity caused by a critical hyperplane is small, i.e. if the output region is rather flat, see Fig.~\ref{flat_regions} for illustration. We thus classify critical points based on the angles between the second derivatives used to identify them. We then only stack up critical points with large derivative differences to construct the matrix in (\ref{eq:precision_system}). For networks with a small width of 8, the number of flat critical points originally used in (\ref{eq:precision_system}) is small and does not affect much the precision of the recovered neurons. Yet, we notice significant improvement for larger widths, for example a $\times 3$ precision improvement on the crucial first layer of our Model II.\\%
\vspace{-0.5cm}
\begin{figure}[htb!]
    \centering
    \begin{subfigure}{0.45\textwidth}
        \centering
        \begin{tikzpicture}[scale=1]
            \draw[thick] (1,1) -- (-2,-2);
            \draw[thick] (-1,1) -- (2,-2);
            \draw[thick, dotted] (0,0) -- (0,-2);
            \node[right] at (0,-2.04) {$x$};
            \draw[thick, dashed, red] (-1, 0.25) -- (2, -2);
            \draw[thick, dotted, red] (-0.2857,-0.285) -- (-0.2857,-2);
            \node[left, red] at (-0.2857,-2) {$x^*$};

        \end{tikzpicture}
    \end{subfigure}
    \hfill
    \begin{subfigure}{0.45\textwidth}
        \centering
        \begin{tikzpicture}[scale=1]
            \draw[thick] (1,0.4) -- (-2,-0.8);
            \draw[thick] (-1,0.4) -- (2,-0.8);
            \draw[thick, red, dashed] (-1,-0.05)-- (2,-0.8);
            \draw[thick, dotted] (0,0) -- (0,-2);
            \draw[thick, dotted, red] (-0.461,-0.184) -- (-0.461,-2);
            \node[left, red] at (-0.461,-2) {$x^*$};
            \node[right] at (0,-2.05) {$x$};
        \end{tikzpicture}
    \end{subfigure}

    \caption{Illustration of imprecisions in flat regions when finding critical points. In both cases the attacker computes a slightly imprecise derivative (dashed red line). Though the angular error (angle between the dashed red line and the black line next to it) is the same in the two cases, the distance between the extracted critical point $x^*$ and the real one $x$ changes depending on how flat the region is. The worst errors are caused by flat regions. For this reason we avoid them.}
    \label{flat_regions}
\end{figure}
\vspace{-0.5cm}

\section{Confident Sign Extraction} \label{sec:signs}
In their original presentation of the neuron wiggle, the authors of~\cite{EC:CCHRSS24} managed to recover almost all neuron signs for a $3072-256^{(8)}-10$ neural network, assuming all signatures had been correctly recovered. It failed for neurons with so-called low confidence. This is why the authors of~\cite{EC:CCHRSS24} recommend using more resources to ensure the sign extraction of the $10\%$ lowest-confidence neurons. However, the authors of~\cite{Hannah} found that in practice, on smaller architectures, many neurons have low confidence and, worse, increasing
resources does not improve it. They resort to an exhaustive search over the signs of the low-confidence neurons to ensure successful recovery. Indeed, making a mistake on the sign of a neuron typically leads to a failure of the signature recovery of the following layer. This calls for a polynomial method to find the sign of low-confidence neurons.

We propose to combine SOE and the neuron wiggle to obtain a sign-extraction strategy which uses neurons with higher confidence than the one used by the neuron wiggle alone. 
We set-up the SOE under-determined system at a point $x$, and then use the result of the neuron wiggle on high-confidence neurons to \emph{complete} the rank: we choose the neuron inactive at $x$ with the highest confidence, we
eliminate its variable from the SOE unknowns. We iteratively add the most confident inactive neurons until the system reaches sufficient rank. Solving this system then gives the signs of the remaining low-confidence inactive neurons at $x$ alongside all the active neurons, overcoming the limitations of the neuron wiggle. For this strategy to succeed, we need a point $x$ where the number of neurons on the target layer inactive at $x$ is at least $d_i$ minus the SOE rank. We found such points easily for all networks considered by sending random input points through the extracted network. We also give
a way to increase the rank of SOE. This rank-increasing method pushes further the value of combining SOE and the neuron wiggle, increasing the confidence of sign recovery. \\
As explained in Section \ref{sign explanation}, SOE consists in solving at a point $x$ and for multiple directions $\Delta_k$ the system,
\[
\{G_{x}^{(i+1)}I_{x}A^{(i)}F^{(i-1)}\Delta_k = f(x+\Delta_k)-f(x)\}_k,
\]
where the unknown is $G_{x}^{(i+1)}I_{x}$. The rank of the system is limited by the rank of $F^{(i-1)}$ around $x$. Crucially, because the unknown is $G_{x}^{(i+1)}I_{x}$, any critical point with the same later activations (i.e., the same $G^{(i+1)} I^{(i)}$) can be used even if the previous activation patterns differ. This implies that we do not have to build the system around a single critical point. Instead, we search for different linear regions nearby which keep later activations fixed. This triggers many different activation patterns for $F^{(i-1)}$, their union having a higher rank, see Fig.~\ref{fig:soe_points}. More details are given in Appendix~\ref{SOE rank}. We call this strategy of building the SOE system not from a single critical point but from a large zone an extension of SOE.\\
\begin{figure}[htb!]
\centering
\begin{tikzpicture}[scale=0.8]

    \draw[red]  (0,2) -- (1,3) -- (3,3) -- (4,2.5) -- (5,1.5) -- (4,0) -- (1,0) -- (0.2,1) -- (0,2);
    \node[right, red] at (1.6,3.3) {\textcolor{red}{$\eta^{(i+j)}$}};

    \draw[blue] (-1,2.1) -- (6,1.4);
    \node[right, blue] at (-1,2.5) {\textcolor{blue}{$\eta_1^{(1)}$}};
    \draw[blue] (0.41,-0.49) -- (4.54,2.95);
    \node[right, blue] at (-0.2,-0.3) {\textcolor{blue}{$\eta_2^{(1)}$}};

    \draw[blue] (1,3.6) -- (1,1.9) -- (-0.175,0.57);
    \node[right, blue] at (1,3.5) {\textcolor{blue}{$\eta_1^{(2)}$}};
    \draw[blue] (2.76,3.6) -- (3,3) -- (3.4,2) -- (4.224,1.57) -- (3.87,-0.82);
    \node[right, blue] at (3.9,-0.2) {\textcolor{blue}{$\eta_2^{(2)}$}};

    \draw[black] (-0.4,0) -- (5,2.4);
    \filldraw[fill=red] (0.63,0.46) circle (1.5pt);
    \filldraw[fill=black] (2.598,1.33) circle (1.5pt);
    \filldraw[fill=black] (3.35,1.66) circle (1.5pt);
    \filldraw[fill=black] (3.72,1.83) circle (1.5pt);
    \filldraw[fill=red] (4.38,2.12) circle (1.5pt);

    \draw[black, dashed] (0.96,-0.51) -- (3.8,3.2);
    \filldraw[fill=red] (1.357,0) circle (1.5pt);
    \filldraw[fill=black] (1.97,0.81) circle (1.5pt);
    \filldraw[fill=black] (2.67,1.73) circle (1.5pt);
    \filldraw[fill=black] (3.22,2.45) circle (1.5pt);
    \filldraw[fill=red] (3.458,2.77) circle (1.5pt);

    \draw[black, dashed] (0.5,3) -- (3.857,-0.494);
    \filldraw[fill=red] (0.745,2.745) circle (1.5pt);
    \filldraw[fill=black] (0.995,2.484) circle (1.5pt);
    \filldraw[fill=black] (1.616,1.838) circle (1.5pt);
    \filldraw[fill=black] (2.323,1.1) circle (1.5pt);
    \filldraw[fill=red] (3.38,0) circle (1.5pt);

    \filldraw[fill=green] (2.25,1.18) circle (2pt);
\end{tikzpicture}
\caption{\label{fig:soe_points}
Identifying critical points that share the same activation pattern after the target layer $i$. First, we randomly select a line ({\color{black}---}) in the input space and identify the largest continuous segment of critical points (\textbullet) from previous layers that lies between two critical points (\textcolor{red}{\textbullet}) in deeper layers. Although these points differ in activation patterns in the previous layers, they share the same activation pattern in the deeper layers. We then take the midpoint (\textcolor{green}{\textbullet}) of this segment as the center of a sphere and explore multiple random directions ({\color{black}- - -}) to find more critical points from previous layers within this region.
}
\end{figure}

\indent This yields two different strategies depending whether we favour a large zone to extend to, or an initial point with a high SOE rank and a large number of confident inactive neurons. Both strategies start by computing the neuron wiggle for all neurons. If we favour the former, the extension strategy, we first run random lines through the input space, then we perform the extension on the most promising zones (see Fig. \ref{fig:soe_points}). We then complete with the highest confidence neurons and select the zone with the highest lowest confidence used. The earlier we are in the extraction, the smaller the equivalence classes defined by $x_1\sim x_2 \iff \{G^{(i+1)}_{x_1}I_{x_1} = G^{(i+1)}_{x_2}I_{x_2}\}$ are, and the less effective this increase in the rank of SOE is. This strategy works at its best in the deeper layers. If we favour the latter, the confidence strategy, we run random points through the network, compute the minimum number of active neurons per layer (the rank of SOE) and we select the point which maximises the lowest confidence used to solve the signs. We then perform an extension on that point before solving for the sign. This strategy works at its best in the first layers where the rank of SOE is still high. The authors of \cite{EC:CCHRSS24} argue that the rank of SOE stabilises in the deeper layers. Surprisingly, both strategies gave similar results in our experiments though they rely on two different structural properties of the target network. The extension strategy relies on the number of previous layer critical points within possible extensions whereas the confidence strategy relies on the rank of SOE as we progress through the network. An attacker can run both strategies and pick for each layer the one giving the more confident results. Variants between the two strategies can also be done (extension on more points than the $x$ given by the confident strategy, assessing the SOE rank of the midpoint of the extension strategy etc.). We refer to all these new sign recovery strategies as SOE+Wiggle. For the sake of consistency, unless stated, all our experiments use the extension strategy. This is because for networks our end-to-end attack targets, the extension strategy can completely avoid the neuron wiggle for many layers, giving a deterministic method for sign recovery. 

\indent We compare our method with prior work on Model II in Table~\ref{fig:sign_recovery_comparison}. Using the neuron wiggle alone,
signs can be recovered only in layers 6 and 7; in these layers, the
confidence falls below the 0.75 minimum threshold recommended by~\cite{Hannah}. In contrast, SOE+Wiggle recovers all signs correctly across the network with high confidence and without resorting to exhaustive search. For large-scale networks, SOE+Wiggle remains effective. In Table~\ref{fig:sign_recovery_comparison_large}, we compare our method with prior work on the $3072-256^{(8)}-10$ model used in Table 4 of~\cite{EC:CCHRSS24}. When computing each neuron’s confidence using the neuron wiggle, we limit the total budget to $10^6$ critical points, instead of collecting 200 points per neuron as in~\cite{EC:CCHRSS24}. We set this limited budget because some neurons would require an excessive number of queries to obtain 200 critical points. With this budget, we still ensure that every neuron has a number of critical points that exceeds the threshold recommended in~\cite{Hannah} for stable confidence estimates. Using the neuron wiggle alone, some signs in layers 5, 7, and 8 are incorrectly recovered, and the lowest confidence in each layer is low (around its minimum value 0.5). In contrast, our SOE+Wiggle method recovers all signs correctly across the network with high confidence.

\begin{table}[htb!]
\caption{
    Comparison of sign-extraction methods on Model II. When recovering the signs of neurons on the target layer, we assume that all preceding weights have been correctly extracted. 
}
\vspace{0.2cm}
\resizebox{\hsize}{!}{  
\centering
\setlength{\tabcolsep}{6pt}
\begin{tabular}{lllllllll}
\toprule
\textbf{Method} & \textbf{Metric} & \textbf{L2$^*$} & \textbf{L3} & \textbf{L4} & \textbf{L5} & \textbf{L6} & \textbf{L7} & \textbf{L8} \\ 
\midrule
Neuron Wiggle~\cite{EC:CCHRSS24}  & correct sign extraction & $7/8$ & $5/8$ & $5/8$ & $6/8$ & $8/8$ & $8/8$ & $5/8$ \\ \hdashline 
\makecell[l]{Neuron Wiggle +\\ exhaustive search~\cite{Hannah}} & low-confidence neurons ($\alpha<0.75$) & $4/8$ & $5/8$ & $5/8$ & $4/8$ & $4/8$ & $3/8$ & $4/8$ \\ 
\midrule
SOE + Wiggle & correct sign extraction & $8/8$ & $8/8$ & $8/8$ & $8/8$ & $8/8$ & $8/8$ & $8/8$ \\
\textbf{(our method})  & lowest confidence used & $\infty^\dagger$ & $0.81$ & $\infty$ & $\infty$ & $0.75$ & $1.0$ & $\infty$ \\ \hdashline 

\bottomrule

\end{tabular}
}
\flushleft{\small  
    $^*$L stands for layer.
    $^\dagger$When the SOE matrix reaches full rank after extension, the sign extraction is deterministic and the neuron wiggle is unnecessary. 
}
\label{fig:sign_recovery_comparison}
\end{table}

\begin{table}[htb!]
\vspace{-0.2cm}
\caption{Comparison of different sign-extraction methods on the $3072-256^{(8)}-10$ model used in Table 4 of~\cite{EC:CCHRSS24}. When recovering the signs of neurons on the target layer, we assume that all preceding weights have been correctly extracted.}
\vspace{0.2cm}
\resizebox{\hsize}{!}{  
\setlength{\tabcolsep}{6pt}
\begin{tabular}{llcccccccc}
\toprule
\multirow{2}{*}{\textbf{Method}} 
& \multirow{2}{*}{\textbf{Metric}}  
& \multicolumn{8}{c}{\textbf{CIFAR10-256$^{(8)}$-10}} \\ 
\cline{3-10} 
& 
& \textbf{L2$^*$} 
& \textbf{L3} 
& \textbf{L4} 
& \textbf{L5} 
& \textbf{L6} 
& \textbf{L7} 
& \textbf{L8} 
& \textbf{Recovery failures} \\ 
\midrule
\makecell[l]{Neuron Wiggle~\cite{EC:CCHRSS24}} 
& lowest confidence used
& 0.53 & 0.53 & 0.51 & 0.50 & 0.50 & 0.51 & 0.51 & 15 \\ 
\midrule
\multirow{2}{*}{Confidence strategy} 
& lowest confidence used
& 0.87 & 0.83 & 0.75 & 0.67 & 0.64 & 0.62 & 0.78 & \multirow{2}{*}{0}  \\
& confidence increases from ext.$^\triangle$ 
& --$^\dagger$ & -- & -- & 0.05 & 0.08 & 0.07 & 0.19 & \\
\hdashline
\multirow{2}{*}{Extension strategy} 
& lowest confidence used
& 0.86 & 0.80 & 0.64 & 0.62 & 0.56 & 0.66 & 0.76 & \multirow{2}{*}{0} \\
& confidence increases from ext. 
& -- & -- & 0.09 & $\infty^\ddagger$ & $\infty$ & 0.15 & $\infty$ &  \\
\bottomrule
\end{tabular}
}
\flushleft{\small  
$^*$ L stands for layer. $^\triangle$ ext. stands for the extension of SOE. $^\dagger$ When the SOE extension does not increase the SOE rank, the lowest confidence level used remains unchanged. $^\ddagger$ Without SOE extension, the extension strategy fails as the SOE rank plus the number of inactive number at the point $x$ is still smaller than $d_i$. 
}
\label{fig:sign_recovery_comparison_large}
\end{table}

\section{Evaluation Metrics and Unrecovered Weights} \label{sec:eval_weights}
\subsection{Evaluation Metrics}
\label{sec:evaluation}
In practice, an attacker cannot query the entire input domain $\mathbb{R}^{d_0}$. Whether by design or due to constraints, extraction is restricted to a smaller
region $S\subseteq \mathbb{R}^{d_0}$. On this space, which we require to be connected, some neurons do not have critical points and are thus always active (on) or always inactive (off). Always-off neurons can safely be ignored as they do not contribute to the network for inputs in $S$. Always-on neurons act as a fixed linear map on $S$. Under our goal of functional equivalence on $S$, this is naturally absorbed when recovering the final layer of $f$. There is no need to identify these neurons individually. If neuron $\eta^{(i)}_k$ is always off on $S$, then the $k$-th column of $A^{(i+1)}$ never contributes to the output on $S$, and recovering those weights is unnecessary. Many situations can cause a weight to never influence the network’s output on $S$. Thus, we introduce two definitions formalizing what the attacker is, in practice, aiming to recover.

\begin{definition}[effective architecture]
Let $f$ be a ReLU network with architecture $[d_0,d_1,\dots,d_r,d_{r+1}]$
and let $S\subseteq\mathbb{R}^{d_0}$. For $i=1,\dots,r$, let $n_i$ denote
the number of neurons $\eta^{(i)}$ in layer $i$ whose pre-activation has a
\emph{non-trivial} zero set on $S$, i.e., $
\{\,x\in S \mid \eta^{(i)}\!\circ F^{(i-1)}(x)=0\,\}\notin\{\emptyset,S\}$. Equivalently, $n_i$ counts the neurons that are neither always off nor always
on over $S$. The \emph{effective architecture} of $f$ on $S$ is then defined as $[d_0,\,n_1,\,n_2,\,\dots,\,n_r,\,d_{r+1}]_S$.
\end{definition}

\begin{definition}[effective weight]
Let $A^{(i)}=(a^{(i)}_{j,k})$ be the weight matrix from layer $i-1$ to layer $i$
and let $S\subseteq\mathbb{R}^{d_0}$. We say that the weight $a^{(i)}_{j,k}$ is
\emph{effective on $S$} if it can influence the network’s output on $S$, that is, if there exists $x\in S$ such that 
\[
\big(\eta^{(i-1)}_{k}\!\circ F^{(i-2)}(x) > 0\big)
\ \text{ and } \ 
\big(\eta^{(i)}_{j}\!\circ F^{(i-1)}(x) > 0\big).
\]
We also say that $a^{(i)}_{j,k}$ is effective on $x$ as a shorthand for effective on $\{x\}\subseteq S$.
\end{definition}

Now that we know what we aim to recover, i.e., effective weights in a network which behaves according to its effective architecture, we need metrics indicating the quality of the extraction. The number of effective weights recovered is not sufficient. Indeed, not all weights contribute equally to the network’s behaviour: failing to recover 1\% of the weights does not generally mean that the extracted model will behave correctly on 99\% of $S$. While~\cite{C:CarJagMir20} introduced the notion of $(\epsilon, \delta)$-functional equivalence to go beyond weights count, we propose a complementary metric, that we call \textit{coverage}, to quantify the fraction of $S$ on which all effective weights are recovered.

\begin{wrapfigure}{r}{6cm}
    \centering
    \vspace{-0.8cm}
\definecolor{darkgray176}{RGB}{176,176,176}
\definecolor{lightgray204}{RGB}{204,204,204}
\definecolor{pastelgreen}{rgb}{0.47, 0.87, 0.47}
\definecolor{pastelred}{rgb}{1.0, 0.41, 0.38}
\definecolor{RawSienna}{rgb}{0.83, 0.54, 0.35}

\resizebox{0.50\columnwidth}{!}{
\begin{tikzpicture}
    \begin{semilogyaxis}[
        width=0.8\textwidth,
        height=0.5\textwidth,
        legend cell align={left},
        legend style={font=\footnotesize, draw=lightgray204},
        tick align=outside,
        tick pos=left,
        x grid style={darkgray176},
        xlabel={1 - $\delta$},
        xmin=0, xmax=1,
        xtick style={color=black},
        xtick={0,0.1,0.2,0.3,0.4,0.5,0.6,0.7,0.8,0.9,1},
        xticklabels={0,0.1,0.2,0.3,0.4,0.5,0.6,0.7,0.8,0.9,1},
        y grid style={darkgray176},
        ylabel={$\epsilon$($\log_{10}$)},
        ymajorgrids,
        ymin=1e-8, ymax=1.5,
        yticklabels={1e-8,1e-6,1e-4,1e-2,1},
        ytick style={color=black}
    ]

    \addplot [mark=none, line width=1pt, pastelgreen] coordinates {
        ( 0.02 , 1.022239292092846 )
        ( 0.04 , 0.5038861548164633 )
        ( 0.06 , 0.31002896890489845 )
        ( 0.08 , 0.22863540407285815 )
        ( 0.1 , 0.18392026426132232 )
        ( 0.12 , 0.13452212630780103 )
        ( 0.14 , 0.10376892854238341 )
        ( 0.16 , 0.07656821721666444 )
        ( 0.18 , 0.056345907067737 )
        ( 0.2 , 0.0433832152402085 )
        ( 0.22 , 0.027052947299429912 )
        ( 0.24 , 0.012800450443700673 )
        ( 0.26 , 0.0018582789766969652 )
        ( 0.28 , 3.725087959595374e-05 )
        ( 0.3 , 1.940160083804007e-05 )
        ( 0.32 , 1.3798072144378336e-05 )
        ( 0.34 , 1.0832691392547217e-05 )
        ( 0.36 , 8.910765355186443e-06 )
        ( 0.38 , 7.525967157155579e-06 )
        ( 0.4 , 6.453236163100609e-06 )
        ( 0.42 , 5.592751489971074e-06 )
        ( 0.44 , 4.87743620549814e-06 )
        ( 0.46 , 4.278074808042015e-06 )
        ( 0.48 , 3.771406028390141e-06 )
        ( 0.5 , 3.337541978205096e-06 )
        ( 0.52 , 2.9632811917639226e-06 )
        ( 0.54 , 2.638841021068375e-06 )
        ( 0.56 , 2.35790428191727e-06 )
        ( 0.58 , 2.1130396167366854e-06 )
        ( 0.6 , 1.8981974349931937e-06 )
        ( 0.62 , 1.7077910475706465e-06 )
        ( 0.64 , 1.5380234440249545e-06 )
        ( 0.66 , 1.3858439982416463e-06 )
        ( 0.68 , 1.2488366963514943e-06 )
        ( 0.7000000000000001 , 1.123654666885605e-06 )
        ( 0.72 , 1.009444447413646e-06 )
        ( 0.74 , 9.040542576099196e-07 )
        ( 0.76 , 8.062444439905327e-07 )
        ( 0.78 , 7.155153006204942e-07 )
        ( 0.8 , 6.30990167044453e-07 )
        ( 0.8200000000000001 , 5.517700001623423e-07 )
        ( 0.84 , 4.771331169447451e-07 )
        ( 0.86 , 4.067942946189488e-07 )
        ( 0.88 , 3.3987138102476493e-07 )
        ( 0.9 , 2.768685724339845e-07 )
        ( 0.92 , 2.1703395820415015e-07 )
        ( 0.9400000000000001 , 1.6024632593260504e-07 )
        ( 0.96 , 1.0568684903238967e-07 )
        ( 0.98 , 5.2491525477755105e-08 )
    };
    \addlegendentry{Model \textbf{I}}
    \addplot [mark=none, line width=1pt, pastelred] coordinates {
        ( 0.02 , 0.2651936439761061 )
        ( 0.04 , 0.13013800853712482 )
        ( 0.06 , 0.08563258809617158 )
        ( 0.08 , 0.07559628473820816 )
        ( 0.1 , 0.046785929150406906 )
        ( 0.12 , 0.017970769449599685 )
        ( 0.14 , 0.001714022571459516 )
        ( 0.16 , 0.0009953868104327752 )
        ( 0.18 , 0.0009429706833153497 )
        ( 0.2 , 0.00019760024779590037 )
        ( 0.22 , 4.874569406423486e-05 )
        ( 0.24 , 3.380480546408625e-05 )
        ( 0.26 , 2.7408754987766e-05 )
        ( 0.28 , 2.3482220926232793e-05 )
        ( 0.3 , 2.061880822539849e-05 )
        ( 0.32 , 1.8257868377299113e-05 )
        ( 0.34 , 1.6026177080944484e-05 )
        ( 0.36 , 1.3402945315739573e-05 )
        ( 0.38 , 1.0397698818251937e-05 )
        ( 0.4 , 8.356156445591966e-06 )
        ( 0.42 , 6.938538897654146e-06 )
        ( 0.44 , 5.883955293139632e-06 )
        ( 0.46 , 5.059336690347309e-06 )
        ( 0.48 , 4.391705315088805e-06 )
        ( 0.5 , 3.839485014647683e-06 )
        ( 0.52 , 3.3803057784268238e-06 )
        ( 0.54 , 2.9996013966175997e-06 )
        ( 0.56 , 2.6803192174143657e-06 )
        ( 0.58 , 2.410795553939549e-06 )
        ( 0.6 , 2.17754076222791e-06 )
        ( 0.62 , 1.9738105640308577e-06 )
        ( 0.64 , 1.7944329537225773e-06 )
        ( 0.66 , 1.6341090071991496e-06 )
        ( 0.68 , 1.4881351974676053e-06 )
        ( 0.7000000000000001 , 1.3535086759531997e-06 )
        ( 0.72 , 1.2281863080439447e-06 )
        ( 0.74 , 1.1125851478015549e-06 )
        ( 0.76 , 1.00524704872076e-06 )
        ( 0.78 , 9.051628545804619e-07 )
        ( 0.8 , 8.10866594808871e-07 )
        ( 0.8200000000000001 , 7.21653419746238e-07 )
        ( 0.84 , 6.354784143228022e-07 )
        ( 0.86 , 5.521413848898066e-07 )
        ( 0.88 , 4.7072404369806566e-07 )
        ( 0.9 , 3.901989637775688e-07 )
        ( 0.92 , 3.111485205316917e-07 )
        ( 0.9400000000000001 , 2.3265258364972653e-07 )
        ( 0.96 , 1.5476092182466415e-07 )
        ( 0.98 , 7.718984418784928e-08 )
    };
    \addlegendentry{Model \textbf{II}}

    \addplot [mark=none, line width=1pt, black, densely dashed] coordinates {
        ( 0.2518 , 2.95899965456409e-05 )
        ( 0.2518 , 1e-8 )
    };
    
    \addplot [mark=none, line width=1pt, black, densely dashed] coordinates {
        ( 0.2653 , 0.00036357050472036833 )
        ( 0.2653 , 1e-8 )
    };
    \addlegendentry{Model Coverage}
    
    \end{semilogyaxis}
\end{tikzpicture}
}
    \vspace{-1.0cm}
    \caption{$(\epsilon, \delta)$ analysis of our end-to-end extractions of Models I and II. As the reduced-space proportion $\delta$ increases, the maximum output error $\epsilon$ decreases. The relationship is initially very sharp but smooths out after a transition phase, which aligns with the model coverage.}
    \vspace{-0.3cm}
    \label{fig:coverage_point}
\end{wrapfigure}

\begin{definition}[coverage] Let $S'\subseteq S\subseteq \mathbb{R}^{d_0}$ be the set of all $x$ for which no unrecovered weight of the model (resp. layer) is effective. The \emph{model (resp. layer) coverage} is the proportion of points of $S$ that are also in $S'$. We refer to $S'$ as the \emph{covered space}.
\end{definition}

Representing volumes, coverage is estimated by sending a large set of random input points through the network and checking how many of them rely on unrecovered weights. The main purpose of coverage is to distinguish errors due to missing information (unrecovered but effective weights) from those due to numerical imprecision, since coverage is independent of the extraction's numerical precision. 
Unlike an $(\epsilon,\delta)$ assessment (see Def.~\ref{def:fe}), which requires choosing a tolerance $\epsilon$ and whose meaning depends on the task/scale (the same $\epsilon$ can have a very different interpretation across networks), coverage is parameter-free and task-agnostic. Setting $\delta$  to $1-$~coverage and computing $\epsilon$ over the covered space provides a cleaner view of precision-improvement strategies: in practice, $\epsilon$ stabilizes quickly once $1-\delta$ falls below the coverage (Fig.~\ref{fig:coverage_point}).

\subsection{Unrecovered Effective Weights}
Our aim is to increase the coverage by recovering as many effective weights as possible given an attack cost. Let's discuss the effective weights we do not recover. They fall in two distinct categories.
\begin{definition}[taxonomy of unrecovered weights]
\label{table:taxonomy of unrecovered weights}
 For a given weight $a^{(i)}_{j,k}$, let $S_{(+,+)}\subseteq S\subseteq \mathbb{R}^{d_0}$ be the set of input points activating the two neurons connected by $a^{(i)}_{j,k}$:
 \[
 S_{(+,+)} = \{x\in S|(\eta_k^{(i-1)}\circ F^{(i-2)}(x)>0) \land (\eta_j^{(i)}\cdot F^{(i-1)}(x)>0)\}
 \]
 Similarly, let $S_{(+,-)}\subseteq S$ be the set of input points activating, out of the two neurons connected by $a^{(i)}_{j,k}$, only the neuron on layer $i-1$:
 \[
 S_{(+,-)} = \{x\in S|(\eta_k^{(i-1)}\circ F^{(i-2)}(x)>0) \land (\eta_j^{(i)}\cdot F^{(i-1)}(x)\leq 0)\}
 \]
We define the $(+,+)$ (resp. $(+,-)$) activation of $a^{(i)}_{j,k}$ as the proportion of points of $S$ that are also in $S_{(+,+)}$ (resp. $S_{(+,-)}$).
\end{definition}

For example, a weight has a $(+,+)$ activation equal to $0$ if and only if $S_{(+,+)} = \emptyset$. Based on $(+,+)$ and $(+,-)$ activations, we give a classification of unrecovered weights as follows:

\begin{table}[h!]
\renewcommand{\arraystretch}{1.2}
\centering
\setlength{\tabcolsep}{6pt}
\begin{tabular}{llcc}
\hline
\multicolumn{2}{c}{\textbf{Taxonomy of unrecovered weights}} & $\mathbf{(+,+)}$ & $\mathbf{(+,-)}$ \\ \hline

\multicolumn{1}{|l|}{\multirow{1}{*}{\textbf{Non-effective weights}}} 
&  & $0$ & $\geq 0$ \\
\hline

\multicolumn{1}{|l|}{\multirow{2}{*}{\textbf{Effective weights}}}
& unreachable weights & $>0$ & $0$ \\
& query-intensive weights & $>0$ & $>0$ \\ \hline
\end{tabular}
\end{table}

\FloatBarrier

\medskip
\noindent \textbf{Query-intensive weights.} 
\label{Query-intensive weights}
To recover a weight, we must hit some specific hyperplanes. The budget for the attack is limited and thus we cannot grid the whole $S$ with search lines. The targeted search, one of the few steps in the attack which we do not improve, has its limitations too. Therefore it is expected that we miss some effective weights. We refer to those weights as query-intensive weights. The number of critical points of a neuron we gather is not clearly correlated to how close it is to being always on/off. Despite this lack of correlation at the neuron level, we observe that in general increasing the number of critical points used reduces significantly the number of query intensive weights, see Table~\ref{table:taxonomy_final}. 

\begin{table}[htb!]
\vspace{-0.7cm}
\caption{
    Taxonomy of unrecovered weights in all extracted models, under the assumption that all preceding layers have been perfectly extracted, for different attack costs. Model I having a much larger input size, we ran into memory limitations when performing the attack with $6{,}000$ critical points.}
\resizebox{\hsize}{!}{
\centering
\setlength{\tabcolsep}{6pt}
\begin{tabular}{l c c c c c}
\hline
\textbf{Models} & \textbf{I} & \multicolumn{2}{c}{\textbf{II}} & \multicolumn{2}{c}{\textbf{III}} \\
Critical points used &  $3{,}000$ & $3{,}000$ & $6{,}000$ & $3{,}000$  & $6{,}000$\\

\hline
non-effective weights & 98   & 104  & 102 &130 &96 \\
\hdashline
unreachable weights & 5   & 8  &8 & 0 &0  \\
query-intensive weights   & 18  & 11 &5  & 25 &7  \\
\hline
\textbf{Model Coverage} & $91.75\%$ & $74.12\%$ & $74.78\%$ & 
$71.35\%$ &$79.53\%$\\
\hline
\end{tabular}
\label{table:taxonomy_final}
\vspace{-0.5cm}
}
\end{table}

\medskip\noindent \textbf{Unreachable weights.}
\label{unreachable weights}
Suppose that all critical points of a neuron \(\eta_j^{(i)}\) lie on the inactive side of a neuron \(\eta_k^{(i-1)}\). In this case, the contribution of \(\eta_k^{(i-1)}\) to the output of \(\eta_j^{(i)}\) is always zero during extraction, leaving us with no information about the corresponding weight \(a_{j,k}^{(i)}\). 
If no region of $S$ exists where both neurons are active (as illustrated in Fig.~\ref{FakeDead}), then $a_{j,k}^{(i)}$ is never activated and can be safely treated as non-effective. However, if a region does exist where both neurons are active (depicted in grey in Fig.~\ref{RealDead}), then $a_{j,k}^{(i)}$ is effective, contributes to the output of the network for inputs in this region, and should ideally be recovered. In this case, we say that $a_{j,k}^{(i)}$ is an \emph{unreachable} weight. Increasing the number of critical points does not diminish the number of unreachable weights, see Table~\ref{table:taxonomy_final}.

\begin{figure}[htb!]
    \centering
    \begin{subfigure}{0.48\textwidth}
        \centering
        \begin{tikzpicture}
        \draw (0,0) rectangle (4,3);
        
        \draw[thick, red]  (1,3) -- (1.2,1.5) -- (0.7,0.5) -- (0,0.2);
        \node[left] at (1.5,0.5) {\textcolor{red}{$\eta_j^{(i)}$}};
        
        \draw[thick , black] (3,3) -- (2.5, 1.5) -- (4,0.5);
        \node[right] at (2.7,0.5) {\textcolor{black}{$\eta_k^{(i-1)}$}};
    
        \node[left] at (2.5, 1.5) {\textcolor{black}{\large {--}}};
        \node[right] at (2.5, 1.53) {\textcolor{black}{\large{+}}};
    
        \node[left] at (1.2,1.53) {\textcolor{red}{\large{+}}};
        \node[right] at (1.3,1.5) {\textcolor{red}{\large{--}}};
        \end{tikzpicture}
        \caption{$a_{j,k}^{(i)}$ is non-effective.}
        \label{FakeDead}
    \end{subfigure}
    \hfill
    \begin{subfigure}{0.48\textwidth}
        \centering
        \begin{tikzpicture}
        \draw (0,0) rectangle (4,3);
        \draw [fill = black, fill opacity=.3] (3,3) -- (2.5, 1.5) -- (4,0.5) -- (4,3) -- cycle;
        \draw[thick, red]  (1,3) -- (1.2,1.5) -- (0.7,0.5) -- (0,0.2);
        \node[left] at (1.5,0.5) {\textcolor{red}{$\eta_j^{(i)}$}};
        
        \draw[thick , black] (3,3) -- (2.5, 1.5) -- (4,0.5);
        \node[right] at (2.7,0.5) {\textcolor{black}{$\eta_k^{(i-1)}$}};
    
        \node[left] at (2.5, 1.5) {\textcolor{black}{\large {--}}};
        \node[right] at (2.5, 1.53) {\textcolor{black}{\large{+}}};
    
        \node[left] at (1.2,1.5) {\textcolor{red}{\large{--}}};
        \node[right] at (1.3,1.53) {\textcolor{red}{\large{+}}};
        \end{tikzpicture}
        \caption{$a_{j,k}^{(i)}$ is unreachable (and so effective).}
        \label{RealDead}
    \end{subfigure}
    \vspace{0.3cm}
    \caption{The arrangement of hyperplanes can cause some effective weights to not be recoverable by the current extraction strategies. We call them unreachable.}
\end{figure}

\section{Experimental Results}
\label{experiments}
All our experiments were conducted on an Intel Core i7-14700F CPU using networks trained on either the {\tt MNIST} or {\tt CIFAR-10} datasets, two widely used benchmarking datasets in computer vision. They have also been commonly used in related works~\cite{EC:CCHRSS24,C:CarJagMir20,Hannah} to evaluate model extraction methods. {\tt MNIST} consists of 28 $\times$ 28 pixel grayscale images of handwritten digits, divided into ten classes (``0'' through ``9'') \cite{NIPS1989_53c3bce6}. In contrast, {\tt CIFAR-10} contains 32 $\times$ 32 pixel RGB images of real-world objects across ten classes (e.g., airplane, automobile, bird, cat, deer, dog, frog, horse, ship, truck) \cite{krizhevsky2009cifar}. 
The search space is the box centered at the origin of radius $10^3$ so as to keep the effective architecture close to the original.

\subsection{Comparison with the State-of-the-Art}
On a $784\!-\!16^{(8)}\!-\!1$ \texttt{MNIST} network, the state-of-the-art extraction of~\cite{Hannah} fails to recover any hidden layer beyond the third, even under the optimal assumption that all previous layers have been perfectly extracted. On the same architecture, with the same assumptions, our attack recovers at least $93\%$ of the effective weights in each deeper layer. We further evaluate our method on other architectures and observe similar performance. The results are given in Table~\ref{table:result_summary_table}.

\begin{table*}[htb!]
\vspace{-0.7cm}
\caption{
    Results demonstrating the weight extraction of layers deeper than those recovered by state-of-the-art methods in \cite{Hannah}. Each layer is attacked independently using $3,000$ critical points. Layer 9 is linear and can therefore be trivially recovered using previously issued queries.
}
\resizebox{\hsize}{!}{
\begin{tabular}{cllllllllllc}
\toprule
\textbf{Model} & \textbf{Signature}  & \textbf{L1}$^*$ & \textbf{L2} & \textbf{L3} & \textbf{L4} & \textbf{L5} & \textbf{L6} &\textbf{ L7} & \textbf{L8} & \textbf{L9} & \textbf{Entire} \\
& {\bf Extraction} & & & & & & & & & &{\bf Model} \\
\midrule
\multicolumn{12}{c}{\textbf{Ours}} \\
\midrule

 \textbf{{\tt CIFAR-10}} & Effective Neurons & $8$ & $8$ & $8$ & $8$ & $6$ & $8$ & $8$ & $6$ & $10$ & - \\
 $3072\text{-}8^{(8)}\text{-}10$ & Eff.$^\ddagger$ Weights Recovered & $100\%$ & $100\%$ & $90.32\%$ & $98.36\%$ & $89.74\%$ & $88.10\%$ & $94.92\%$ & $90.70\%$ & $100\%$ & $99.91\%$ \\
 (I)  & Time & $56$m$48$s & $1$h$33$m & $1$h$42$m & $4$h$43$m & $4$h$25$m & $5$h$2$m & $6$h$53$m & $2$h$23$m & $0.01$s & $27$h$38$m \\
   & Queries & $2^{23.55}$ & $2^{26.15}$ & $2^{26.15}$ & $2^{27.10}$ & $2^{27.04}$ & $2^{27.16}$ & $2^{27.56}$ & $2^{26.45}$ & $0$ & $2^{29.72}$\\[0.5ex]  \hdashline 
\textbf{{\tt MNIST}}    & Effective Neurons & $8$ & $7$ & $8$ &  $7$ & $6$ & $7$ & $8$ & $6$ & $1$ & - \\
$784\text{-}8^{(8)}\text{-}1$ & Eff. Weights Recovered & $100\%$ & $100\%$ & $100\%$ &  $100\%$ & $85.37\%$ & $90.91\%$ & $88.24\%$ & $92.86\%$ & $100\%$ & $99.71\%$ \\
(II)    & Time & $3$m$24$s & $7$m$5$s & $7$m$37$s & $10$m$41$s & $11$m$46$s & $28$m$29$s & $20$m$38$s & $36$m$46$s & $0.01$s & $2$h$6$m \\
& Queries  & $2^{21.59}$ & $2^{24.21}$ & $2^{24.22}$ & $2^{24.24}$ & $2^{24.26}$ & $2^{25.35}$ & $2^{24.96}$ & $2^{25.61}$ & $0$ &  $2^{27.64}$\\[0.5ex]  \hdashline    
\textbf{{\tt MNIST}}    & Effective Neurons & $16$ & $16$ & $16$ & $16$ & $16$ & $15$ & $16$ & $12$ & $1$ & - \\
$784\text{-}16^{(8)}\text{-}1$ & Eff. Weights Recovered & $100\%$ & $100\%$ & $100\%$ & $99.61\%$ & $99.21\%$ & $97.01\%$ & $98.64\%$ & $93.78\%$ & $100\%$ & $99.82\%$ \\
(III)   & Time & $6$m$17$s & $26$m$13$s & $28$m$57$s & $39$m$33$s & $1$h$39$m & $2$h$13$m & $1$h$48$m & $3$h$56$m & $0.01$s & $11$h$17$m \\
& Queries & $2^{21.60}$ & $2^{24.22}$ & $2^{24.24}$ & $2^{24.66}$ & $2^{25.63}$ & $2^{26.02}$ & $2^{25.78}$ & $2^{26.93}$ & $0$ & $2^{28.48}$\\ \hline
\multicolumn{12}{c}{\textbf{Foerster et al.~\cite{Hannah}}} \\ 
\midrule
 \textbf{{\tt MNIST}}    & All Weights & $100\%$ & $100\%$ & - & $\geq 37.50\%^\dagger$ & - & - & - & $\geq 0\%^\dagger$ & $100\%$ & - \\
  $784\text{-}16^{(8)}\text{-}1$   & Time & $2$h$46$m & $7$m$50$s & - & \textgreater 36 h & - & - & - & \textgreater 36 h & $0.01$s & - \\
 & Queries & $2^{22.36}$ & $2^{19.01}$ & - & - & - & - & - & - & $0$ & - \\ 
    \bottomrule
\end{tabular}
}

\flushleft{\small  
    $^*$L stands for layer. \textbf{L1} is attacked with $500$ critical points.
    $^\ddagger$'Eff.' is short for effective.
    $^\dagger$Foerster et al. recover 6/16 neurons on layer 4 and 0/16 neurons on layer 8, running the attack for 36 hours before stopping.
}
\label{table:result_summary_table}
\end{table*}

The taxonomy of these extraction results, as well as those obtained using $6{,}000$ critical points, is given above in Table \ref{table:taxonomy_final}. These results confirm our expectations. First, despite a large $S$, we observe a large amount of non-effective weights, about as much as one or two hidden layers. This shows that previously used metrics in the literature, notably the numbers of total weights recovered, can be misleading. Second, increasing the attack cost has the intended effects on the quality of the extraction: query-intensive weights diminish unlike unreachable weights. By doubling the number of critical points, Model II's coverage barely changed from 74.12\% to 74.78\% while Model III's, which does not have any unreachable active weights, changed more substantially from 71.35\% to 79.53\%. For both models only 0.9\% additional weights are recovered.

\subsection{Results of the End-to-End Attack}

We give in Table \ref{end-to-end_results} the results for the end-to-end extraction. For sign recovery, we employ the extension strategy. In model III, the extraction process terminates at layer 7, where only 12 of 16 neuron signatures are recovered with low precision, thereby preventing sign extraction and the subsequent signature extraction at layer 8. Note how, contrary to the previous extraction, the proportion of effective weights recovered plunges from layer five onwards.

\begin{table*}[htb!]
\vspace{-0.7cm}
\caption{
    End-to-end attack results using $3,000$ critical points for all but the first layer. Layer 9 is linear and can therefore be trivially recovered using previously made queries.
}
\resizebox{\hsize}{!}{
\begin{tabular}{clcccccccccc}
\toprule
$\quad$\textbf{Model} & \textbf{End-to-end} & \textbf{L1}$^*$ & \textbf{L2} & \textbf{L3} & \textbf{L4} & \textbf{L5} & \textbf{L6} &\textbf{ L7} & \textbf{L8} & \textbf{L9} & \textbf{Entire} \\ 
& {\bf Extraction} & & & & & & & & & & {\bf Model}  \\
\midrule
    & Effective Neurons & $8$ & $8$ & $8$ & $8$ & $6$ & $8$ & $8$ & $6$ & $10$ & - \\
    & Layer coverage & $100\%$ & $100\%$ & $98.97\%$ & $75.11\%$ & $98.43\%$ & $95.06\%$ & $97.12\%$ & $99.06\%$ & $100\%$ & $73.47\%$\\
    \textbf{{\tt CIFAR-10}} & Eff.$^\ddagger$ Weights Recovered  & $100\%$ & $100\%$ & $90.32\%$ & $91.67\%$ & $92.86\%$ & $90.48\%$ & $94.92\%$ & $93.02\%$ & $100\%$ & $99.90\%$ \\
    $3072\text{-}8^{(8)}\text{-}10$  & Time for Signature & $1$h$14$m & $1$h$35$m & $1$h$38$m & $2$h$38$m & $5$h$35$m & $3$h$19$m & $1$h$41$m & $1$h$38$m & $0.01$s & $19$h$18$m\\
    (I) & Queries for Signature & $2^{24.56}$ & $2^{26.15}$ & $2^{26.15}$ & $2^{26.58}$ & $2^{27.4}$ & $2^{26.76}$ & $2^{26.18}$ & $2^{26.22}$ & $0$ & $2^{29.42}$\\
    & Time for Signs & $1.69$s & $21$s & $43$s & $2$m$1$s & $2$m$25$s & $2$m$43$s & $2$m$30$s & $1$m$51$s & $0.01$s & $12$m$36$s\\
    & Queries for Signs & $2^{24.17}$ & $2^{22.85}$ & $2^{24.19}$ & $2^{25.52}$ & $2^{19.87}$ & $2^{19.64}$ & $2^{26.83}$ & $2^{19}$ & $0$ & $2^{27.68}$\\[0.5ex]  
\midrule
    & Effective Neurons & $8$ & $7$ & $8$ &  $7$ & $6$ & $7$ & $8$ & $6$ & $1$ & - \\
    & Layer coverage \rule{0pt}{2ex} & $100\%$ & $100\%$ & $100\%$ & $100\%$ & $93.75\%$ & $93.75\%$ & $76.21\%$ & $88.50\%$ & $100\%$ & $74.82\%$ \\
    \textbf{{\tt MNIST}} & Eff. Weights Recovered & $100\%$ & $100\%$ & $100\%$ &  $100\%$ & $85.37\%$ & $87.5\%$ & $86.27\%$ & $92.86\%$ & $100\%$ & $99.68\%$ \\
    $784\text{-}8^{(8)}\text{-}1$  & Time for Signature & $12$m$8$s & $7$m$5$s & $7$m$46$s & $11$m$13$s & $13$m$23$s & $10$m$21$s & $22$m$7$s & $19$m$48$s & $0.01$s & $1$h$43$m\\
    (II) & Queries for Signature & $2^{24.36}$ & $2^{24.21}$ & $2^{24.22}$ & $2^{24.24}$ & $2^{24.25}$ & $2^{24.27}$ & $2^{24.97}$ & $2^{24.78}$ & $0$ & $2^{27.44}$ \\
    & Time for Signs & $0.19$s & $0.85$s & $47$s & $1.46$s & $1.33$s & $1$m$35$s & $1$m$24$s & $12$s & $0.01$s & $4$m$2$s\\
    & Queries for Signs & $2^{20.23}$ & $2^{16.79}$ & $2^{24.26}$ & $2^{20.32}$ & $2^{15.68}$ & $2^{25.24}$ & $2^{25.48}$ & $2^{22.4}$ & $0$ & $2^{26.77}$\\[0.5ex]  
\midrule
    & Effective Neurons & $16$ & $16$ & $16$ & $16$ & $16$ & $15$ & $16$ & - & - & - \\
    & Layer coverage \rule{0pt}{2ex} & $100\%$ & $100\%$ & $100\%$ & $98.78\%$ & $88.28\%$ & $87.69\%$ & $28.09\%$ & - & - & -\\
    \textbf{{\tt MNIST}} & Eff. Weights Recovered & $100\%$ & $100\%$ & $100\%$ & $99.61\%$ & $91.63\%$ & $89.57\%$ & $76.39\%$ & - & - & - \\
    $784\text{-}16^{(8)}\text{-}1$  & Time for Signature & $21$m$35$s & $24$m$9$s & $24$m$47$s & $39$m$16$s & $1$h$59$m & $1$h$11$m & $2$h$44$m & - & - & -\\
    (III) & Queries for Signature & $2^{23.69}$ & $2^{24.31}$ & $2^{24.27}$ & $2^{24.69}$ & $2^{26.01}$ & $2^{25.1}$ & $2^{25.63}$ & - & - & -\\
    & Time for Signs & $0.73$s & $7$m$44$s & $7$m$49$s & $5$m$32$s & $8$m$22$s & $7$m$11$s & $7$m$38$s & - & - & -\\
    & Queries for Signs & $2^{20.23}$ & $2^{24.7}$ & $2^{25.31}$ & $2^{25.64}$ & $2^{25.68}$ & $2^{26.4}$ & $2^{26.43}$ & - & - & -\\ 
    \bottomrule
\end{tabular}
}

\flushleft{\small  
    $^*$L stands for layer. \textbf{L1} is attacked with $500$ critical points.
}
\label{end-to-end_results}
\end{table*}

The sudden drop in coverage in layer 4 of Model I is caused by a weight with a $(+,+)$ activation of $23\%$ of an almost-always-on neuron ($98\%$ active). Either making more queries, or slightly reducing the search space $S$ should increase substantially the coverage. The drop in coverage in layer 7 of Model II is caused by a particularly active unreachable weight with a $(+,+)$ activation of $11\%$. The $(\epsilon, \delta)$ analysis of the extraction of Model I and II is given in Fig.~\ref{fig:coverage_point}. We reach a $(3.64\times 10^{-4},0.26)$ extraction for Model I and a $(2.96\times 10^{-5},0.25)$ extraction for Model II. The choice of $\delta$ values is guided by their respective model coverage. 
\section{Conclusion} \label{sec:conclusion}

In this work, we carried out an in-depth analysis of the extraction approach
introduced in~\cite{C:CarJagMir20}, reused in several follow-up works~\cite{EC:CCHRSS24,cryptoeprint:2024/1580,Hannah}, and identified critical limitations that prevented recovering weights beyond the third hidden layer. We then proposed improvements to the sign-recovery process by cleverly combining two methods introduced from Canales-Martínez et al.~\cite{EC:CCHRSS24}. Finally, we introduced several techniques that together improve the numerical precision of the extracted signature. Taken as a whole, these advances permit, for the first time, polynomial end-to-end extraction of neural networks of significant depth and open the way to practical attacks on larger architectures.

\medskip
\noindent \textbf{Limitations}.
The main limitation we faced to going much deeper with the non end-to-end attack is the influence of deeper layers when targetting small-width networks, and the cost of the attack when targetting large-width networks. For the end-to-end attack, floating-point imprecisions remain the main limitation. Addressing these issues is a promising direction for future work.

\bigskip

\noindent \textbf{Acknowledgements.} The authors with NTU affiliation are supported by the Singapore NRF-NRFI08-2022-0013 grant. Christina Boura is partially supported by the France 2030 program under grant agreement No. ANR-22-PECY-0010 Cryptanalyse. Haolin Liu is also supported by the National Natural Science Foundation of China (No. 62372294) and the China Scholarship Council (No. 202406230017).

\bibliographystyle{splncs04}

\appendix

\section{Example of a Small Neural Network}
\label{example_network}
We illustrate here the definitions introduced in \cref{definitions}. Consider the $2-3-2-1$ neural network given by the following layers $A^{(i)}$, $b^{(i)}$:
\[
A^{(1)} = \begin{pmatrix}
1 & 1 \\
1 & -1 \\
0.5 & 2\\
\end{pmatrix},  b^{(1)} = \begin{pmatrix}
    1\\
    -2\\
    -5\\
\end{pmatrix}
\]
\[
A^{(2)} = \begin{pmatrix}
{\color{blue} 2} & {\color{blue}-4} & {\color{blue}-1}\\
{\color{red}-3} & {\color{red}4} & {\color{red}5}\\
\end{pmatrix},  b^{(2)} = \begin{pmatrix}
    {\color{blue}-2}\\
    {\color{red}-3}\\
\end{pmatrix}
\]
\[
A^{(3)} = \begin{pmatrix}
1 & -1\\
\end{pmatrix},  b^{(3)} = \begin{pmatrix}
    3
\end{pmatrix}
\]

The architecture of this neural network is depicted in Fig.~\ref{fig:network}. Let the input to the neural network be $v=(x,y)$. The neurons in the first layer output $(\eta^{(1)}_1(v),\eta^{(2)}_1(v),\eta^{(3)}_1(v)) = (x+y+1, x-y-2, 0.5x+2y-5)$. Depending on $(x,y)$, the ReLU activation function $\sigma$ might set some entries to $0$ before passing them to the second layer. For example, when the input is $v = (2,2)$, we have: $F^{(1)}(v) = (5,-2,0)$,  $\sigma \circ F^{(1)}(v) = (5,0,0)$, $F^{(2)}(v) = 5*(2, -3) + 0*(-4,4) + 0*(-1, 5) + (-2, -3) = (8, -18)$, $\sigma \circ F^{(2)}(v) = (8,0)$, and finally $f(v) = F^{(3)}(v) = 8*1+0*(-1)+3 = 11$. Since $F^{(1)}(v) = (5,-2,0)$, it follows that $v$ is a critical point of $\eta_3^{(1)}$.

This network can also be represented as a collection of polytopes (see Fig.~\ref{fig:Second-Layer Polytopes}). As can be seen in this figure, critical hyperplanes from the first layer make three straight lines, represented as dotted grey lines: $\eta^{(1)}_1:x+y+1$, $\eta^{(1)}_2:x-y-2$, and $\eta^{(1)}_3:0.5x+2y-5$. The hyperplanes of $\eta_1^{(2)}$ and $\eta_2^{(2)}$ are shown in their respective colors, with their active side marked by $+$ and their inactive side by $-$. They indeed break on the grey lines as critical hyperplanes on deeper layers break on all hyperplanes of previous layers: the input to $\eta_k^{(i)}$ is $F^{(i-1)}(v)$ rather than $v$ for $i = 2,\cdots,r$.

\begin{figure}[H]
\centering
\begin{tikzpicture}[
    scale=0.8, 
    every node/.style={scale=0.8}, 
    node distance={20mm}, 
    minimum size=0.8cm, 
    main/.style = {draw, circle}
]
\node[main] (1) at (0,3) {${x}_{1}$};
\node[main] (2) at (0,1) {${x}_{2}$};
\node[main] (3) at (2,4) {${\eta}^{(1)}_{1}$};
\node[main] (4) at (2,2) {${\eta}^{(1)}_{2}$};
\node[main] (5) at (2,0) {${\eta}^{(1)}_{3}$};
\node[main, blue] (6) at (4,3) {${\eta}^{(2)}_{1}$};
\node[main, red] (7) at (4,1) {${\eta}^{(2)}_{2}$};
\node[main] (8) at (6,2) {$y$};
\draw[->] (1) -- (3);
\draw[->] (1) -- (4);
\draw[->] (1) -- (5);
\draw[->] (2) -- (3);
\draw[->] (2) -- (4);
\draw[->] (2) -- (5);

\draw[->, blue] (3) -- (6) node[pos=0.2, right] {2};
\draw[->, red] (3) -- (7) node[pos=0.5, above] {-3};
\draw[->, blue] (4) -- (6) node[pos=0.5, left] {-4};
\draw[->, red] (4) -- (7) node[pos=0.5, left] {4};
\draw[->, blue] (5) -- (6) node[pos=0.5, below] {-1};
\draw[->, red] (5) -- (7) node[pos=0.1, right] {5};

\draw[->] (6) -- (8);
\draw[->] (7) -- (8);

\node[above of=1, node distance=2cm] (9) {Input Layer};
\node[right of=9, node distance =3cm] (10) {Hidden Layers};
\node[right of=10, node distance=3cm] (11) {Output Layer};
\end{tikzpicture}
\caption{Network representation of the neural network example, with the details of layer $2$ highlighted in colour.}
\label{fig:network}
\end{figure}
Let's compute $f_v$ for $v = (2,2)$ as an example:
\[
\Gamma_v = A^{(3)}\begin{pmatrix}
    1&0\\
    0&0\\
\end{pmatrix}A^{(2)}
\begin{pmatrix}
    1&0&0\\
    0&0&0\\
    0&0&1
\end{pmatrix}A^{(1)} = (2.5, 4)
\]
\[
\gamma_v = f(v)-\Gamma_v\cdot v = 11-13 = 2.
\]
\begin{figure}[h!]
    \centering
    \includegraphics[width=0.6\linewidth]{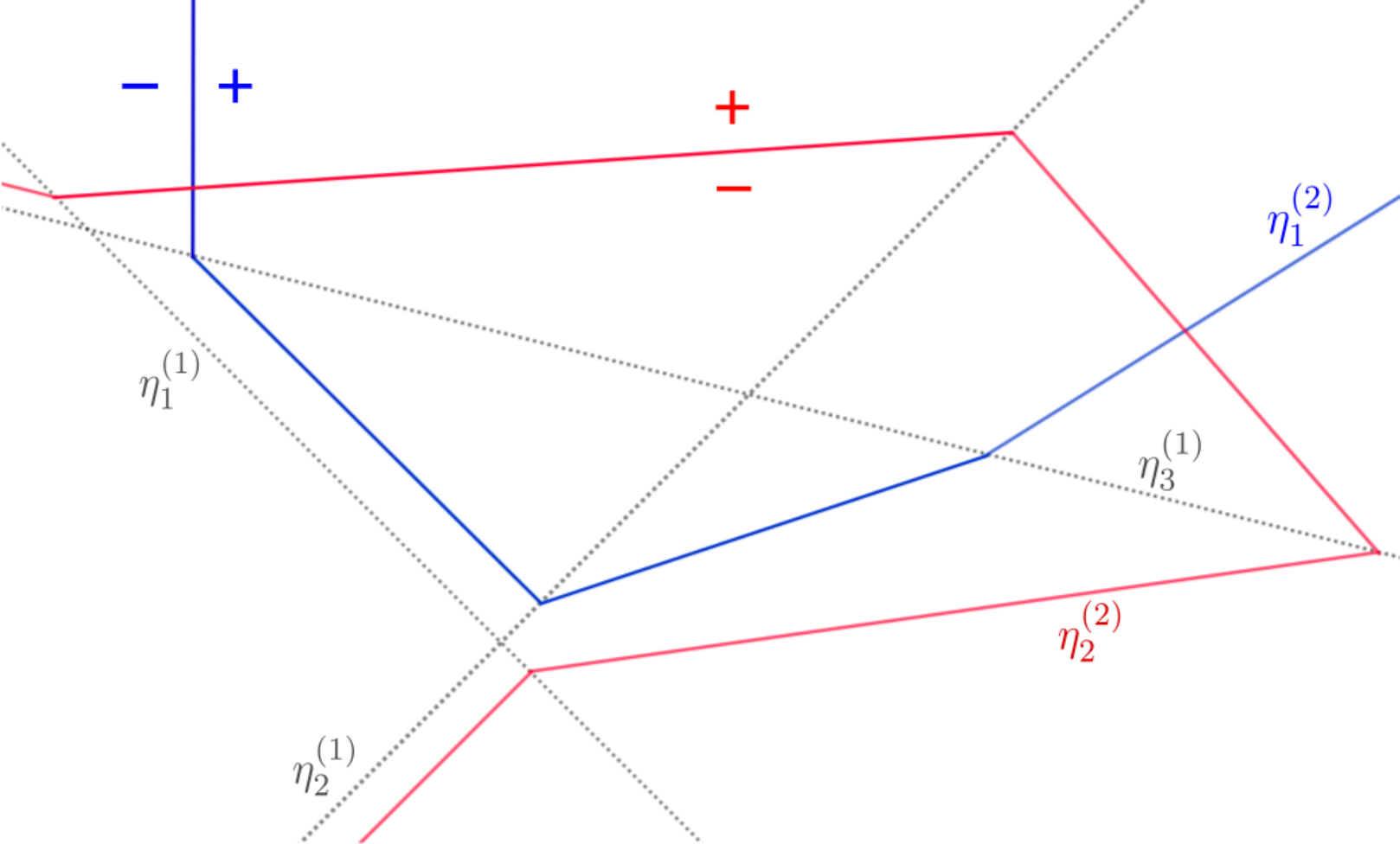}
    \caption{\label{fig:Second-Layer Polytopes}
    The neural network example partitions the input space into 18 distinct polytopes.}
\end{figure}



\section{Finding Critical Points}
\label{search for ccps}
The attack starts by searching for critical points of neurons on the layer we are targeting. Critical points are at the turning point of a ReLU and thus their left and right derivatives do not agree. We'll exploit this non-linearity to find critical points. We now describe the procedure developed in \cite{C:CarJagMir20} and in \cite{EC:CCHRSS24}. Very intuitive illustrations are given in both papers.

Suppose we perform the search between two points $a$ and $b$ on the search line. First we compute the derivatives of $f$ at $a$ towards $b$ and of $f$ at $b$ towards $a$, respectively $m_a$ and $m_b$. If they are the same then we know that there are no critical points on the search line. If they are not the same we know that there is at least a critical point on the search line between $a$ and $b$. We could directly reiterate the binary search on each half of the segment, etc. There is however a more efficient way to find the critical points. It relies on predicting where the critical point should be if it is the only one between $a$ and $b$, and checking if it is indeed there.

If there is a unique critical point, we expect to find it where the derivatives cross at $x^*  = a + \frac{f(b)-f(a)-m_b(b-a)}{a-b}$. Its expected value is $\hat{f}(x^* ) = f(a)+m_a\frac{f(b)-f(a)-m_b(b-a)}{a-b}$. If $\hat{f}(x^*) = f(x)$ then the critical point is unique, otherwise we keep dividing the segment with dichotomy. This faster algorithm can generate pathological errors, see (\cite{EC:CCHRSS24}, p.32) for more conditions to deal with them. In practice, we use \cite{C:CarJagMir20}'s strategy.

\section{Details of Signature Extraction from \cite{C:CarJagMir20}}

\subsection{Proof of Lemma 2}
\label{proof_lemma}
Let \( x \) be a critical point of layer \( i \in \{1, \ldots , r \} \) for neuron $\eta_k^{(i)}$ such that $x$ is not a critical point of any other neuron, and \( \Delta \in \mathbb{R}^{d_0} \). We have that:
\[
\partial^2_\Delta f (x) = c_k^{(i)} \big|A_k^{(i)}\cdot \Gamma_x^{(i-1)} \Delta \big|
\]

\begin{proof}
We take \(\epsilon_2\) sufficiently small so that \(x + \epsilon_2 \Delta\) and \(x -\epsilon_2 \Delta\) are in the neighbourhood of $x$. Thus \(x + \epsilon_2 \Delta\) and \(x -\epsilon_2 \Delta\) have the exact same activation pattern except for $\eta_k^{(i)}$. Let's write $G^{(i)}$ for the composition of layers starting from $i$, i.e. $f = G^{(i+1)}\circ F^{(i)}$.
\begin{align*}
\partial^2_\Delta f (x) & = \frac{1}{\epsilon_1}(f(x + \epsilon_2 \Delta) + f(x - \epsilon_2 \Delta) - 2 f(x)) \\
& = \frac{1}{\epsilon_1} \bigg( G^{(i+1)} \circ \sigma ( A^{(i)} F^{(i-1)}(x + \epsilon_2 \Delta) + b^{(i)} ) \\ 
&+ G^{(i+1)} \circ \sigma (A^{(i)} F^{(i-1)}(x - \epsilon_2 \Delta) + b^{(i)} ) \\
& -2 \, G^{(i+1)} \circ \sigma( A^{(i)} F^{(i-1)}(x) + b^{(i)} ) \bigg)  \\
& = \Omega_x^{(i)} \bigg( \sigma ( A^{(i)} F^{(i-1)}(x + \epsilon_2\Delta) + b^{(i)} ) \\ 
& + \sigma (A^{(i)} F^{(i-1)}(x - \epsilon_2 \Delta) + b^{(i)} ) \\
& -2 \, \sigma( A^{(i)} F^{(i-1)}(x) + b^{(i)} ) \bigg)
\end{align*}

Where \( \Omega_x^{(i)}\) is \( \frac{1}{\epsilon_1}(c_1^{(i)},\ldots, c_{d_i}^{(i)}) \) for some values $c_n \in \mathbb{R}$. Let us analyze \( C = \sigma ( A^{(i)} F_x^{(i-1)}(x + \epsilon_2 \Delta )  + b^{(i)} ) + \sigma (A^{(i)} F_x^{(i-1)}(x - \epsilon_2 \Delta ) + b^{(i)} ) - 2 \sigma (A^{(i)} F_x^{(i-1)}(x) + b^{(i)} ) \in \mathbb{R}^{d_i \times 1} \).

The coefficients \( j \neq k \) of \( C \) are zero. Indeed, since \( x \), \( x + \epsilon_2 \Delta \), and \( x - \epsilon_2 \Delta \) belong to the same activation region except for \( \eta_k^{(i)} \), the computation becomes affine, leading to zero. More precisely:

\begin{align*}
C[j] & \, = \, A_j^{(i)} \cdot [\Gamma_x^{(i-1)}(x + \epsilon_2 \Delta)+ \gamma_x^{(i-1)}]\\
& + A_j^{(i)} \cdot [\Gamma_x^{(i-1)}(x - \epsilon_2 \Delta) + \gamma_x^{(i-1)}]\\ 
&-A_j^{(i)}\cdot2[\Gamma_x^{(i-1)} x + \gamma_x^{(i-1)}]
\\ &= A_j^{(i)}\cdot \Gamma_x^{(i-1)} x + A_j^{(i)}\cdot  \epsilon_2\Gamma_x^{(i-1)} \Delta \\
&+ \,A_j^{(i)}\cdot \Gamma_x^{(i-1)} x \, - \,  A_j^{(i)}\cdot \epsilon_2\Gamma_x^{(i-1)} \Delta 
- A_j^{(i)}\cdot 2\Gamma_x^{(i-1)} x \\
&= 0 \quad  \text{when $\eta_j^{(i)}$ is active at $x$.}
\end{align*}

\( C[j] = 0 + 0 - 2 \times 0 = 0 \) when \( \eta_j^{(i)} \) is inactive at \( x \). 

For coefficient \( k \), since \( x  \) is a critical point of $\eta_k^{(i)}$, we have:

\[
A_k^{(i)}\cdot F^{(i-1)}(x) \, + \,  b^{(i)}_k = A_k^{(i)}\cdot [\Gamma_x^{(i-1)} x + \gamma_x^{(i-1)}] \, + \, b^{(i)}_k = 0
\]

and since \(A_k^{(i)}\cdot \Gamma_x^{(i-1)} \Delta \) is either positive or negative, we get:

\[
C[k] = \big|A_k^{(i)}\cdot \Gamma_x^{(i-1)} \Delta \big|
\]

Thus, by multiplying \(\Omega^{(i)}_x \) by \( C \), we obtain the expected result.
\end{proof}

Intuition behind the above proof is given in Fig.~\ref{intuition_extraction} below. 

\begin{figure}[htb!]
    \centering

    \begin{subfigure}{0.45\textwidth}
        \centering
        \begin{tikzpicture}[scale=1]
            \draw[thick] (0,0) -- (-2,-1);
            \draw[thick] (0,0) -- (1.5,1.5);
            \draw[thick, red, ->] (0,0) -- (-0.8,-0.4);
            \draw[thick, red, ->] (0,0) -- (0.6,0.6);
            \draw[very thick, blue, ->] (0,0) -- (-0.2, 0.2);
            \draw[thick, ->] (-3,-1.5) -- (-3,-1);
            \node[above] at (-3,-1) {$f(x)$};
            \draw[thick, ->] (-3,-1.5) -- (-2.5,-1.5);
            \node[right] at (-2.5,-1.5) {$x$};
            \filldraw[fill=black] (0,0) circle (2pt);
            \node[below] at (0,0) {$x^*$};
            \draw[dashed, white, thick] (-1.4,-1.4) -- (-1.5,-1.5); 
            \draw[dashed, thick] (0,0) -- (2,1);
        \end{tikzpicture}
        \caption{$\eta(x)\geq 0$ for $x\geq x^*$}
    \end{subfigure}
    \hfill
    \begin{subfigure}{0.45\textwidth}
        \centering
        \begin{tikzpicture}[scale=1]
            \draw[thick] (0,0) -- (-2,-1);
            \draw[thick] (0,0) -- (1.5,1.5);
            \draw[thick, red, ->] (0,0) -- (-0.8,-0.4);
            \draw[thick, red, ->] (0,0) -- (0.6,0.6);
            \filldraw[fill=black] (0,0) circle (2pt);
            \draw[very thick, blue, ->] (0,0) -- (-0.2, 0.2);
            \draw[thick, ->] (-3,-1.5) -- (-3,-1);
            \node[above] at (-3,-1) {$f(x)$};
            \draw[thick, ->] (-3,-1.5) -- (-2.5,-1.5);
            \node[right] at (-2.5,-1.5) {$x$};
            \filldraw[fill=black] (0,0) circle (2pt);
            \node[below] at (0,0) {$x^*$};
            \draw[dashed, thick] (0,0) -- (-1.5,-1.5);
            \draw[dashed, white, thick] (1.9, 0.9) -- (2,1);
        \end{tikzpicture}
        \caption{$\eta(x)\geq0$ for $x\leq x^*$}
    \end{subfigure}
    \vspace{0.3cm}
    \caption{Illustration of how a neuron’s activation change at a critical point creates a detectable 
    break in the network’s output, allowing the extraction of the neuron. The solid line shows the true output $f(x)$, while 
    the dotted line shows the hypothetical output if the neuron~$\eta$ were always 
    inactive. At the critical point $x^*$, $\eta$ flips its state, creating a change 
    in slope. Two cases can occur: (a) the neuron becomes active 
    ($\eta(x) \geq 0$ for $x \geq x^*$), in which case the solid line bends away from 
    the dotted line; or (b) the neuron becomes inactive ($\eta(x) \geq 0$ for $x \leq x^*$), 
    in which case the solid line aligns with the dotted line after $x^*$. 
    The slopes before and after $x^*$ (red arrows) differ by the absolute weight of $\eta$, 
    shown by the blue arrow. This indicates how much the neuron contributes, hence its weights, although an 
    additional step is required to determine the correct sign of the contribution.}
    \label{intuition_extraction}
\end{figure}

\subsection{Correlating the Weights' Signs}
\label{removing_absolute_values}
We know by Lemma \ref{lemma1} that \[
    \frac{\partial^2_{\Delta} f (x)}{\partial^2_{\Delta_0} f (x)} = \frac{\big|A_k^{(i)}\cdot(\Gamma_x^{(i-1)} \Delta)\big|}{\big|A_k^{(i)} \cdot(\Gamma_x^{(i-1)} \Delta_0)\big|} \coloneqq \frac{\big |\alpha|}{|\beta|}
\]
Now consider on one hand, 
\[\frac{\partial^2_{\Delta + \Delta_0} f (x)}{\partial^2_{\Delta_0} f(x)} = \frac{\big|\alpha + \beta \big|}{\big | \beta \big |}\]
And on the other, 
\[\frac{\partial^2_{\Delta} f (x)+\partial^2_{\Delta_0} f(x)}{\partial^2_{\Delta_0} f(x)} = \frac{\big|\alpha \big|+\big |\beta\big|}{\big|\beta \big|}\]
The two values above are equal if and only if $\alpha$ and $\beta$ have the same sign. This additional test allows us to remove the absolute values and correlate the sign of our result for each direction $\Delta$ we take to that of the first direction $\Delta_0$ taken. This is why the sign extraction only has to find one sign per neuron rather than one sign per weight.

\subsection{Solving for the Signature}
\label{extraction_detail}
By querying different directions $\{\Delta_m\}$ around $x$ a critical point of $\eta^{(i)}_k$, we can obtain a set of $\{y_m\}$ such that, \[y_m =\frac{A_k^{(i)}\cdot (\Gamma_x^{(i-1)} \Delta_m)}{c^{(i)}_k}, \] where $c^{(i)}_k = A_k^{(i)}\cdot(\Gamma_x^{(i-1)} \Delta_0)$.
To solve for $A_k^{(i)}$, we build the following system of equations:
\[
\begin{pmatrix}
\Gamma_x^{(i-1)} \Delta_0\\
    \Gamma_x^{(i-1)} \Delta_1\\
    \Gamma_x^{(i-1)} \Delta_2\\
    \vdots\\
    \Gamma_x^{(i-1)} \Delta_{d_{i-1}-1}
\end{pmatrix}\cdot \frac{1}{c^{(i)}_k}
\begin{pmatrix}
    a^{(i)}_{k,0}\\
    a^{(i)}_{k,1}\\
    a^{(i)}_{k,2}\\
    \vdots\\
     a^{(i)}_{k,d_{i-1}-1}
\end{pmatrix} = 
\begin{pmatrix}
    1\\
    y_1\\
    y_2\\
    \vdots\\
    y_{d_{i-1}-1}\\
\end{pmatrix}
\]

When extracting the first layer, the input of the layer is also the input to the network, giving us full control over it. We can use the input basis vectors $\{e_i\}$ as directions $\Delta$, allowing us to recover each entry of the signature independently from each equation. However, for deeper layers, we must first gather enough linearly independent conditions.

\section{Deeper Critical Points Merging with Components on the Target Layer}
\label{proof_noise2}
In this section, we show that a critical point $x_1$ of neuron $\eta_k^{(i)}$ on the target layer can merge with a critical point $x_2$ of a neuron in a deeper layer $i+t$ if $x_2$ causes $\eta_k^{(i)}$ to be the only active neuron on its layer, meaning that for some $a_k \in \mathbb{R}$,  
\[ \sigma\circ F^{(i)}(x_2) = (0, \dots, 0, a_k, 0, \dots, 0).\]
This indicates that a strict association between deeper critical points and incorrect components can lead to mislabelling. We note $y_1$ and $y_2$ the matrix rows extracted from $x_1$ and $x_2$ respectively. We therefore have,
\begin{align*}
\partial^2_\Delta f (x_1) &= (\Gamma_{x_1}^{(i-1)} \Delta) \cdot y_1\\
\partial^2_\Delta f (x_2) &= (\Gamma_{x_2}^{(i+t-1)} \Delta) \cdot y_2
\end{align*}
Let's write $A_{x_2}$ the matrix $A'_{x_2}\circ I^{(i)}_{x_2}\circ A^{(i)}$, where $A'_{x_2}=A^{(i+t-1)}\circ I^{(i+t-2)}_{x_2}\circ \dots \circ I^{(i+1)}_{x_2}\circ A^{(i+1)}$.
Thus, for $c \in \mathbb{R}$,
\begin{align*}
 (\Gamma_{x_2}^{(i+t-1)} \Delta) \cdot y_2 &= [(A_{x_2}\circ\Gamma_{x_2}^{(i-1)}) \Delta] \cdot y_2\\
&= (\Gamma_{x_2}^{(i-1)} \Delta) \cdot (A_{x_2}^{\top}y_2) \\
&= (\Gamma_{x_2}^{(i-1)} \Delta) \cdot [(A^{(i)\top}\circ I^{(i)\top}_{x_2}\circ A'^{\top}_{x_2})y_2]\\
&= (\Gamma_{x_2}^{(i-1)} \Delta) \cdot cy_1
\end{align*}

Let's explain the last equality. Our assumption that $x_2$ causes $\eta_k^{(i)}$ to be the only active neuron on its layer implies that $I^{(i)}$ is diagonal with all entries zero except for a one at the $k$-th diagonal position. We also have that $y_1$ is the $k$-th row of the matrix $A^{(i)}$. Therefore, $(A^{(i)\top}\circ I^{(i)\top}_{x_2}\circ A'^{\top}_{x_2})y_2=c\cdot y_1$ , where $c$ is a constant. The partial signatures $y_1$ and $c\cdot y_1$ extracted from $x_1$ and $x_2$ can be thus merged even though $x_2$ is on a deeper layer.

This case is very rare, even for small networks, but it might still happen. We encountered two such neurons in our experiments. The extracted component will have the right weights, but we might discard it because of a high $\tau$. We check for this phenomenon on neurons with a low $\tau$ (neurons we are almost sure are on the target layer): first we finish the extraction of those neurons, we could compute their sign using the neuron wiggle and if a critical point is on the inactive side of all those neurons, we do not count it towards its component's $\tau$.

On a side note, when this happens, the sign of all neurons on layer $i$ can be recovered immediately with no queries or computations as we have an input $x^*$ for which all neurons on the layer must be inactive (first order derivatives in $\mathcal{P}_{x^*}$ are equal to zero). We must have $\eta^{(i)}
_k \circ F^{(i-1)}( x^*) < 0$, for all neurons on the layer, so we choose the sign of each $\eta^{(i)}
_k$ to match this condition.

\section{Noise for Large Networks}
\label{Appendix Noise for large networks}
In this section, we show that recovered partial neurons from deeper layers correspond to very different mathematical expressions from neurons on the target layer, leading to the possibility of identifying deeper critical point at no cost using statistical arguments. Indeed, we can view each recovered weight as a random variable.\\
Let's start with a neuron $\eta^{(i+1)}_j$ on the $i+1$'th layer. The equation of its hyperplane is given by:
\[
A^{(i+1)}_j \cdot F^{(i)}(x)+b^{(i+1)}_j=0
\]
Which is, rewritten according to layer $i-1$, 
\[
A^{(i+1)}_j \cdot \left(I^{(i)} A^{(i)} F^{(i-1)}(x) + I^{(i)} b^{(i)} \right) + b^{(i+1)}_j = 0
\]
\[
\sum_{k=1}^{d_{i+1}} a^{(i+1)}_{j,k} \cdot \left[ I^{(i)} A^{(i)} F^{(i-1)}(x) \right]_k + \mathcal{B} = 0
\]
\[
\sum_{k=1}^{d_{i+1}} a^{(i+1)}_{j,k} \cdot \left( \sum_{l=1}^{d_i} I^{(i)}_{k,k} \cdot a^{(i)}_{k,l} \cdot [F^{(i-1)}(x)]_l \right) + \mathcal{B} = 0
\]
Expressing directly in terms of outputs of the extracted layer $i-1$:
\[
\sum_{l=1}^{d_i} \left( \sum_{k=1}^{d_{i+1}} a^{(i+1)}_{j,k} \cdot I^{(i)}_{k,k} \cdot a^{(i)}_{k,l} \right) [F^{(i-1)}(x)]_l + \mathcal{B} = 0
\]
and so, if it is not made inactive by a ReLU on layer $i-1$, the first weight extracted is:
\[
\sum_{k=1}^{d_{i+1}} a^{(i+1)}_{j,k} \cdot I^{(i)}_{k,k} \cdot a^{(i)}_{k,1}
\]
And therefore by a quick induction, the first extracted weight of a neuron $\eta_j^{(n)}$ when recovering layer $i$ correspond to all the active paths between the weight we think we are extracting and $\eta^{(n)}_j$:
\begin{align*}
&\sum_{k_n=1}^{d_{n}} a^{(n)}_{j,k_n} \cdot I^{(n-1)}_{k_n,k_n}\cdot(\sum_{k_{n-1}=1}^{d_{n-1}} a^{(n-1)}_{k_n,k_{n-1}} \cdot I^{(n-2)}_{k_{n-1},k_{n-1}}\\ &\cdot (\dots (\sum_{k_{i+1}=1}^{d_{i+1}} a^{(i+1)}_{k_{i+2},k_{i+1}} \cdot I^{(i)}_{k_{i+1},k_{i+1}} \cdot a^{(i)}_{k_{i+1},1})\dots))
\end{align*}

As an example, suppose for simplicity that every layer has width $d$, that for any given critical point we expect half of the neurons to be active, and that all the weights of the network have the same mean and variance. Let's find the distribution of our extracted $w^{(n)}$ when attempting to find layer $i$. Let's denote the set of paths as $P = \{(k_{i+1}, k_{i+2}, \dots, k_n, j)|\forall m, k_m \in \{1, \dots, d\}\}$. Then,
\[
w^{(n)} = \sum_{p\in P}a^{(i)}_{k_i+1, 1}\cdot\prod_{l = i}^{n-1} (a^{(l+1)}_{p_{l-i+2}, p_{l-i+1}}\cdot I^{(l)}_{p_{l-i+1},p_{l-i+1}}), \text{where } p_i \text{ is the } i\text{th entry of } p   
\]
Let's write $L$ for $n-i$, and $\mathcal{C}_L$ for $\frac{d^L}{2^L}$ then we have that,
\begin{align*}
   \mathbb{E}[w^{(n)}] &= d^L\cdot \mathbb{E}[w^{(i)}]\cdot (\mathbb{E}[w^{(i)}]\cdot \frac{1}{2})^{L} \\
   &= \mathcal{C}_L\cdot\mathbb{E}[w^{(i)}]^{L+1}
\end{align*}
and,
\begin{align*}
\mathbb{E}[(w^{(n)})^2] &= \mathbb{E}[\sum_{p\in P}\sum_{p'\in P} (a^{(i)}_{k_i+1, 1})^2\cdot\prod_{l = i}^{n-1} a^{(l+1)}_{p_{l-i+2}, p_{l-i+1}}\\ &\cdot I^{(l)}_{p_{l-i+1},p_{l-i+1}}\cdot a^{(l+1)}_{p'_{l-i+2}, p'_{l-i+1}}\cdot I^{(l)}_{p'_{l-i+1},p'_{l-i+1}}]
\end{align*}
Let's separate between $p=p'$ and $p \neq p'$. For $p=p'$, we have just as above, $d^{L}$ paths, $I^{(l)}_{p',p'} = I^{(l)}_{p,p}$ and $a^{(l)}_{p'} = a^{(l)}_{p}$. For $p \neq p'$, we have $d^{2L}-d^{L}$ possible path combinations, $I^{(l)}_{p',p'} \neq I^{(l)}_{p,p}$ and $a^{(l)}_{p'} \neq a^{(l)}_{p}$. By assumption $\mathbb{E}[I^{(l)}] = \frac{1}{2}$. We therefore get summing the $p=p'$ paths with the $p\neq p'$ paths, 
\[
\mathbb{E}[(w^{(n)})^2] = d^L\cdot \mathbb{E}[(w^{(i)})^2]\cdot (\mathbb{E}[(w^{(i)})^2]\cdot \frac{1}{2})^{L} + (d^{2L}-d^L)\cdot \mathbb{E}[w^{(i)}]^2\cdot (\mathbb{E}[w^{(i)}]^2\cdot \frac{1}{2^2})^L
\]
Let's simplify notation and set $\mathbb{E}[w^{(i)}] = \mu$, $\mathbb{V}(w^{(i)}) = \sigma^2$ for all $i$ using our assumption,  
\begin{align*}
\mathbb{E}[(w^{(n)})^2] &= d^L\cdot(\sigma^2+\mu^2)\cdot(\frac{\sigma^2+\mu^2}{2})^L+(d^{2L}-d^L)\cdot\mu^2\cdot(\frac{\mu}{2})^{2L}\\
&= \mathcal{C}_L\left[ (\mathcal{C}_L-1)\mu^{2L+2} +(\sigma^2+\mu^2)^{L+1}\right]
\end{align*}
and therefore, 
\begin{align*}
    \mathbb{V}(w^{(n)}) &= \mathbb{E}[(w^{(n)})^2]-\mathbb{E}[w^{(n)}]^2
    \\
    &= \mathcal{C}_L\left[ (\mathcal{C}_L-1)\mu^{2L+2} +(\sigma^2+\mu^2)^{L+1} \right]-\mathcal{C}_L^2 \mu^{2L+2} \\
    &= \mathcal{C}_L[(\sigma^2+\mu^2)^{L+1}-\mu^{2(L+1)}]
\end{align*}
Therefore for a given statistical distribution of $w^{(i)}$, the variance and the mean of weights recovered from a neuron on a deeper layer depends on how much deeper the neuron is compared to the target layer ($L = n-i$), as well as the dimension of each layer. In the attack, a partial neuron is extracted up to a scalar. Our experiments show that despite this scalar and underdetermined systems, variations in variance are enough to identify a large proportion of points on deeper layers, see Fig. \ref{fig:noise_stat_per_layer}. Being able to identify points on deeper layers leads to the speedup discussed in Section~\ref{subsec:filteralgo}, see Table~\ref{table:noisestats}.

\begin{table}[htb!]
    \centering
    \resizebox{\textwidth}{!}{%
        \begin{tabular}{l|r|r|r|r|r|r|r|r|r|r|r}
            L4 points kept (\%)     & 100 & 99 & 95 & 90 & 80 & 70 & 60 & 50 & \dots & 20 & 10 \\
            \hline
            L4 points proportion (\%) & 11.8 & 14.5 & 16.6 & 17.6 & 19.2 & 20.2 & 21.1 & 22.2 & \dots & 26.5 & 24.5 \\
            \hline
            Expected min speedup & 1 & 1.54 & 2.00 & 2.27 & 2.69 & 2.96 & 3.26 & 3.58 & \dots & 5.10 & 4.37\\ 
        \end{tabular}        
    }
    \vspace{0.3cm}\caption{Effectiveness of discarding partial weights with high variance in tackling noise when extracting layer 4 in a $784\!-\!256^{(16)}\!-\!1$ network with 100{,}000 points. Initially, points on layer 4 represent 11.7\% of all points. Imposing stricter upper bounds on the variance discards points predominantly on deeper layers, gradually increasing the proportion of points on layer 4. The expected minimum speed-up, $ (\text{L4 points kept/All points kept})^2$, is computed assuming the intersections and merging run in $\Omega(x^2)$ where $x$ is the number of critical points collected. For example, if the procedure keeps 50\% of the points on layer 4, while discarding 90\% of all points, we would need to collect twice the number of points to run the attack with the same number of points on the target layer. Therefore we would end up merging 80\% less points, meaning 5 times less. As the merging is at least quadratic, we would expect it to run at least 25 times faster. While trying to collect enough partial signatures to have a reasonable number of merges on the target layer, we ran into hardware limitations, making it impossible to compute empirical speedups (akin to \cite{cryptoeprint:2024/1580}). Given the rank deficiencies, $\Omega(x^2)$ is far from being tight and empirical speedup factors should be much greater.}
    \label{table:noisestats}
\end{table}

\begin{figure}[htb!]
    \centering
    \includegraphics[width=0.95\linewidth]{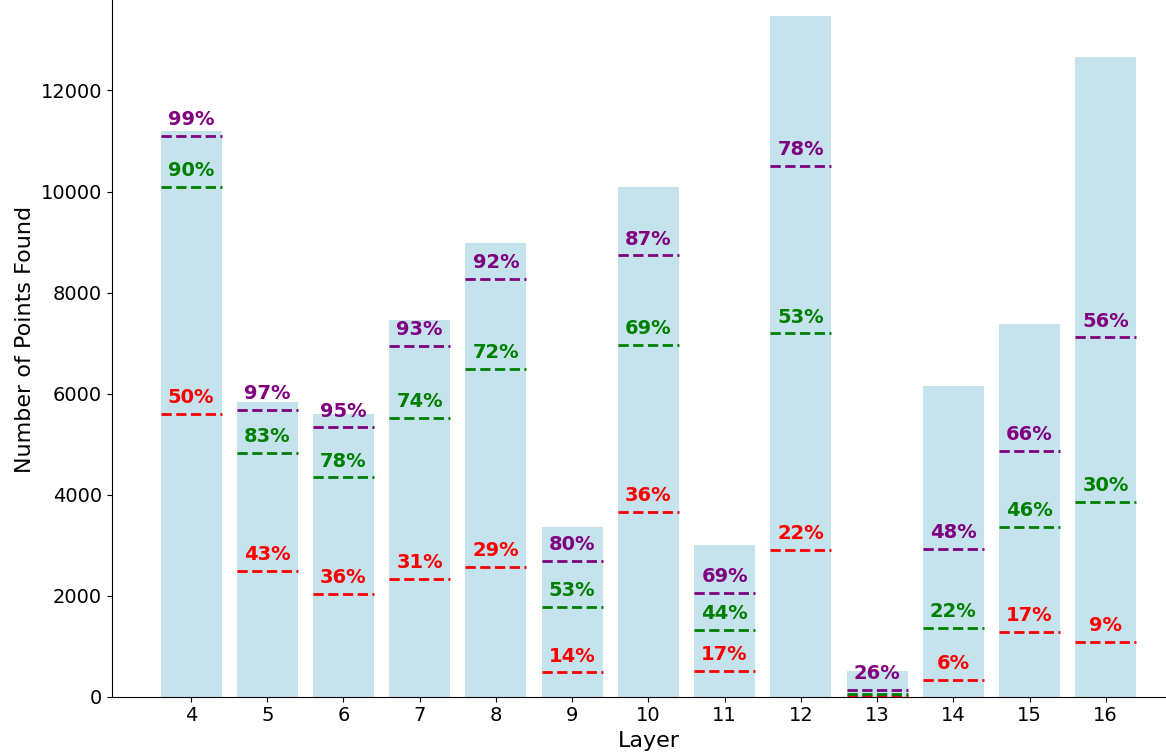}
    \caption{For each layer, the number of critical points we discard on a   $784\!-\!256^{(16)}\!-\!1$ network attacked with 100{,}000 points by imposing an upper bound on variance that would keep 99\%, 90\% and 50\% of points on layer 4, respectively in purple, green and red. As implied by the mathematical analysis, critical points from deeper layers are easier to identify: the percentage of points on deeper layers kept is always lower than that of layer 4.}
    \label{fig:noise_stat_per_layer}
\end{figure}

\FloatBarrier

\section{Impact of Scaling Factors on Precision}
\label{sclaing_factor_precision}
In this section, we give an example showing how large disparities among scaling factors degrade the precision of signature recovery. Without normalization, the scaling factors for layer 2 in a $784-8^{(8)}-1$ {\tt MNIST} model are
\[
(161.71, 32.11, 23.14, 1.07, 0.81, 36.64, 0.56, 0)
\]
where values are rounded to two decimal places for simplicity. The ratio between the largest factor (161.71) and the smallest factor (0.56) is 288.77.

Next, we can identify a critical point for the 8th neuron in layer 3, for which the recovered weights are imprecise. The true weights for this neuron are
\[
(0.51, 0.47, -0.55, 0.37, -0.28, 0.63, -0.23, -0.39)
\]
with relative ratios (normalized to the first element)

\[
(1, 0.92, -1.08, 0.73, -0.56, 1.25, -0.46, -0.77)\tag{a}\label{eq:ratio_a}
\]

Without normalization, the recovered weights are
\[
(-6.33\times 10^{-4}, -3.04\times 10^{-3}, 4.99\times 10^{-3}, -7.29\times 10^{-2}, 6.85\times 10^{-2}, -3.46\times 10^{-3}, 0, 0)
\]
After multiplication by the scaling factors, they become
\[
(-1.02\times 10^{-1}, -9.77\times 10^{-2}, 1.16\times 10^{-1}, -7.79\times 10^{-2}, 5.56\times 10^{-2}, -1.27\times 10^{-1}, 0, 0)
\]
with relative ratios (normalized to the first element)
\[
(1, 0.95, -1.13, 0.76, -0.54, 1.24, 0, 0)\tag{b}\label{eq:ratio_b}
\]

After applying normalization, the recovered weights are
\[
(-6.34\times 10^{-4}, -2.95\times 10^{-3}, 4.78\times 10^{-3}, -7.04\times 10^{-2}, 7.12\times 10^{-2}, -3.51\times 10^{-3}, 0, 0)
\]
After multiplication by the scaling factors, they become
\[
(-1.03\times 10^{-1}, -9.46\times 10^{-2}, 1.11\times 10^{-1}, -7.52\times 10^{-2}, 5.78\times 10^{-2}, -1.28\times 10^{-1}, 0, 0)
\]
with relative ratios (normalized to the first element)
\[
(1, 0.92, -1.08, 0.73, -0.56, 1.25, 0, 0)\tag{c}\label{eq:ratio_c}
\]

With normalization, the recovered weights exhibit perfect relative ratios (\ref{eq:ratio_c}), matching the true ratios (\ref{eq:ratio_a}). In contrast, without normalization, the recovered weights are imprecise, as reflected in the relative ratios (\ref{eq:ratio_b}).

\section{Increasing the Rank of SOE}
\label{SOE rank}
In this section, we analyze the rank of our improved SOE system, which stacks the SOE systems of multiple points. Consider $n$ points $\{x_j\}_{j\in[1,n]}$ for sign extraction on layer $i$. These points share the same activation pattern on later layers, denoted as $G_{x_j}^{(i+1)}$ (simplified as $G^{(i+1)}$), but trigger many different activation patterns for $F_{x_j}^{(i-1)}$. For each point $x_j$, we can construct a SOE system:
\[
\{G^{(i+1)}I^{(i)}A^{(i)}F_{x_j}^{(i-1)}\Delta_{k_j} = f(x_j+\Delta_{k_j})-f(x_j)\}_{k_j}
\]
by taking multiple input directions $\Delta_{k_j}$, where our unknown is $G^{(i+1)}I^{(i)}$. Stacking these systems yields a larger SOE system:
\[
G^{(i+1)}I^{(i)}
\begin{pmatrix}
    \{A^{(i)}F_{x_1}^{(i-1)}\Delta_{k_1}\}_{k_1}\\
    \{A^{(i)}F_{x_2}^{(i-1)}\Delta_{k_2}\}_{k_2}\\
    \vdots\\
    \{A^{(i)}F_{x_n}^{(i-1)}\Delta_{k_n}\}_{k_n}
\end{pmatrix}=
\begin{pmatrix}
    B_1\\
    B_2\\
    \vdots\\
    B_n
\end{pmatrix}=
\begin{pmatrix}
    \{f(x_1+\Delta_{k_1})-f(x_1)\}_{k_1}\\
    \{f(x_2+\Delta_{k_2})-f(x_2)\}_{k_2}\\
    \vdots\\
    \{f(x_n+\Delta_{k_n})-f(x_n)\}_{k_n}\\
\end{pmatrix}
\]
where each block matrix $\{A^{(i)}F_{x_j}^{(i-1)}\Delta_{k_j}\}_{k_j}$ is denoted as $B_j$. 

In this stacked SOE system, our focus is on the rank of
\[
C=\begin{pmatrix}
    B_1\\
    B_2\\
    \vdots\\
    B_n
\end{pmatrix}
\] 
as rank deficiency determines how many inactive neurons with high confidence must be recovered using neuron wiggle.

First, we give a trivial bound for $\text{rank}(C)$. Since $C$ contains all rows of $\{B_j\}_{j\in[1,n]}$, $\text{rank}(C)$ is at least the largest rank among the individual blocks: $\max_{1\leq j\leq n} \text{rank}(B_j)$. On the other hand, the row space of $C$ is spanned by the row spaces of all $\{B_j\}_{j\in[1,n]}$, so $\text{rank}(C)$ cannot exceed the sum of the ranks of these blocks: $\Sigma_{j=1}^n\text{rank}(B_j)$. Therefore, a loose bound for $\text{rank}(C)$ is:
\[
\max_{1\leq j\leq n} \text{rank}(B_j)\leq \text{rank}(C) \leq \Sigma_{j=1}^n\text{rank}(B_j)
\]

Next, we present a method for exact rank calculation. Since the row space of $C$ is spanned by the row spaces of all $\{B_j\}_{j\in[1,n]}$, its rank is the sum of the ranks of the individual blocks minus the dimension of any shared row spaces, to avoid double-counting. This yields
\begin{align*}
\text{rank}(C) = &\Sigma_{j=1}^n\text{rank}(B_j) - \Sigma_{1\leq j<k\leq n}\dim(R(B_j)\cap R(B_k))\\
&+\Sigma_{1\leq j<k<l \leq n}\dim(R(B_j)\cap R(B_k)\cap R(B_l))\\
&-\cdots + (-1)^{n-1}\dim(R(B_1)\cap R(B_2)\cap \cdots \cap R(B_n))\\
\end{align*}
where $\dim(\cdot)$ denotes the dimension of a space, and $R(B_j)$ represents the row space spanned by $B_j$.

\end{document}